\documentclass{article}


\usepackage[final,nonatbib]{neurips_2022}
\usepackage{graphicx}
\usepackage{amsmath,amsfonts,amssymb,amsthm,dsfont,mathrsfs}
\usepackage{bbm}
\usepackage{mathbbol}
\usepackage{mathtools}%
\usepackage[numbers]{natbib}
\usepackage{url}
\usepackage{color}
\usepackage{epstopdf}
\usepackage{dsfont}
\usepackage{ifpdf}
\usepackage{float}
\usepackage{bbm}
\usepackage{caption}
\usepackage{subcaption}

\newcommand*{\Scale}[2][4]{\scalebox{#1}{$#2$}}%



\usepackage[utf8]{inputenc} 
\usepackage[T1]{fontenc}    
\usepackage{hyperref}       
\usepackage{url}            
\usepackage{booktabs}       
\usepackage{amsfonts}       
\usepackage{nicefrac}       
\usepackage{microtype}      
\usepackage{xcolor}         
\newcommand{\dff}{\stackrel{\scriptscriptstyle\triangle}{=}}
\newtheorem{theorem}{Theorem}
\newtheorem{lemma}{Lemma}

\newtheorem{remark}{Remark}
\newtheorem{definition}{Definition}

\def \bX {{\mathbf{X}}}
\def \bx {{\mathbf{x}}}
\def \bz {{\mathbf{z}}}

\def \bC {{\mathbf{C}}}
\def \bV {{\mathbf{V}}}
\def \bU {{\mathbf{U}}}
\def \bD {{\mathbf{D}}}
\def \bW {{\mathbf{W}}}
\def \bJ {{\mathbf{J}}}

\def \bL {{\mathbf{L}}}

\def \bH {{\mathbf{H}}}
\def \bP {{\mathbf{P}}}
\def \by {{\mathbf{y}}}
\def \br {{\mathbf{r}}}
\def \bv{{\mathbf{v}}}
\def \bu {{\mathbf{u}}}
\def \bP {{\mathbf{P}}}
\def \bI {{\mathbf{I}}}
\def \mR {{\mathbb{R}}}
\def \mE {{\mathbb{E}}}
\def \mP {{\mathbb{P}}}
\def \bE {{\mathbf{E}}}
\def \cH {{\mathcal{H}}}
\title{coVariance Neural Networks}

%

\author{%
  Saurabh Sihag\\
 University of Pennsylvania\\
  \texttt{sihags@pennmedicine.upenn.edu} \\
   \And
   Gonzalo Mateos \\
   University of Rochester \\
   \texttt{gmateosb@ece.rochester.edu} \\
   \AND
   Corey McMillan \\
   University of Pennsylvania \\
   \texttt{cmcmilla@pennmedicine.upenn.edu} \\
   \And
   Alejandro Ribeiro \\
   University of Pennsylvania \\
   \texttt{aribeiro@seas.upenn.edu} \\
}

\begin{document}

\maketitle

\begin{abstract}
  Graph neural networks (GNN) are an effective framework that exploit inter-relationships within graph-structured data for learning. Principal component analysis (PCA) involves the projection of data on the eigenspace of the covariance matrix and draws similarities with the graph convolutional filters in GNNs. Motivated by this observation, we study a GNN architecture, called coVariance neural network (VNN), that operates on sample covariance matrices as graphs. We theoretically establish the stability of VNNs to perturbations in the covariance matrix, thus, implying an advantage over standard PCA-based data analysis approaches that are prone to instability due to principal components associated with close eigenvalues. Our experiments on real-world datasets validate our theoretical results and show that VNN performance is indeed more stable than PCA-based statistical approaches. Moreover, our experiments on multi-resolution datasets also demonstrate that VNNs are amenable to transferability of performance over covariance matrices of different dimensions; a feature that is infeasible for PCA-based approaches.
\end{abstract}

\section{Introduction}
Convolutional neural networks (CNN) are among the most popular deep learning (DL) architectures due to their demonstrated ability to learn latent features from Euclidean data adaptively and automatically~\cite{lecun2015deep}. Motivated by the fact that graph models can naturally represent data in many practical applications~\cite{scarselli2008graph}, the generalizations of CNNs to adapt to graph data using GNN architectures have rapidly emerged in recent years~\cite{wu2020comprehensive}. A practical challenge to DL architectures is scalability to high-dimensional data~\cite{garg2019low}. Specifically, DL models can typically have millions of parameters that require long training times and high storage capacity~\cite{hosseini2020deep}. GNN architectures have two attractive features towards learning from high-dimensional data: 1. the number of parameters associated with graph convolution operations in GNNs is independent of the graph size; and 2. under certain assumptions, GNN performance can be transferred across graphs of different dimensions~\cite{ruiz2020graphon}.

Principal component analysis (PCA) is a traditional, non-parametric approach for linear dimensionality reduction and increase of interpretability of high dimensional datasets~\cite{jolliffe2016principal}. Accordingly, it is common to embed the PCA transform within a DL pipeline~\cite{garg2019low, chan2015pcanet, umer2020fake}. Such approaches usually use PCA to optimize the DL model parameters by removing redundancy in the features in intermediate layers. The scope of our work is different. In this paper, we leverage the connections between PCA and the mathematical concepts used in GNNs to propose a learning framework that can recover PCA and also, inherit additional desirable features associated with GNNs.

\subsection{PCA and Graph Fourier Transform}\label{pca_gft_intro}
PCA is an orthonormal transform which identifies the modes of maximum variance in the dataset. The transformed dimensions of the data (known as principal components) are uncorrelated. Computing the PCA transform reduces to finding the eigenbasis of the covariance matrix or equivalently, the singular value decomposition of the data matrix~\cite{shlens2014tutorial}. We note that PCA computation using eigendecomposition of a covariance matrix has similarities with the graph Fourier transform (GFT) in the graph signal processing literature~\cite{ortega2018graph}. This observation is discussed more formally next.


 Consider an $m-$dimensional random vector $\bX \in \mR^{m \times 1}$ with the true or ensemble covariance matrix of $\bX$ defined as $\bC \dff \mE[(\bX-\mE[\bX])(\bX-\mE[\bX])^{\sf T}]$.  In practice, we may not observe the data model for $\bX$ and its statistics, such as the covariance matrix $\bC$ directly. Alternatively, we have access to a dataset consisting of $n$ random, independent and identically distributed (i.i.d) samples of $\bX$, given by~$\bx_i \in \mR^{m \times 1},\forall i\in\{1,\dots,n\}$, where the data matrix is $\hat\bx_n = [\bx_1,\dots,\bx_n]$. Using the samples $\bx_i, \forall i\in\{1,\dots,n\}$, we can form an estimate of the ensemble covariance matrix, conventionally referred to as the sample covariance matrix
\begin{align}\label{sample_cov}
    \hat \bC_{n} \triangleq \frac{1}{n} \sum\limits_{i=1}^n(\bx_i - \bar\bx) (\bx_i-\bar\bx)^{\sf T}\;,
\end{align}
 where $\bar \bx$ is the sample mean across $n$ samples and $\cdot^{\sf T}$ refers to the transpose operator. Given the eigendecomposition of $\hat\bC_n$
\begin{align}\label{sample_eig}
    \hat \bC_{n} = \bU \bW \bU^{\sf T}\;,
\end{align}
where $\bU = [\bu_1,\dots,\bu_m]$ is the matrix with its columns as the eigenvectors and $\bW= {\sf diag}(w_1,\dots,w_m)$ is the diagonal matrix of eigenvalues, such that, $w_1\geq w_2\cdots\geq w_m$, the implementation of PCA is given by~\cite{shlens2014tutorial}
\begin{align}\label{pca_tr}
    \hat\by_n = \bU^{\sf T}\hat\bx_n\;. 
\end{align}
The eigenvectors of $\hat\bC_n$ form the principal components and the transformation in~\eqref{pca_tr} is equivalent to the projection of the sample $\bx_i$ in the eigenvector space of $\hat\bC_n$. A detailed overview of PCA is provided in Appendix~\ref{pca_ov}.  We note that in the context of graph signal processing,~\eqref{pca_tr} is equivalent to the graph Fourier transform (GFT) if the covariance matrix $\hat\bC_n$ is considered the graph representation or a graph shift operator for the data $\hat\bx_n$~\cite{gama2020graphs}. 
\subsection{Contributions}
Motivated by the observation in Section~\ref{pca_gft_intro}, we introduce the notion of coVariance Fourier transform and define coVariance filters similar to graph convolutional filters in GNNs. Our analysis shows that standard PCA can be recovered using coVariance filters. Furthermore, we propose a DL architecture based on coVariance filters, referred to as coVariance neural networks (VNN)\footnote{We emphasize on V in coVariance for abbreviations to avoid confusions with any existing machine learning frameworks.}: a permutation-invariant architecture derived from the GNN framework that uses convolutional filters defined on the covariance matrix. Understanding VNNs is relevant given the ubiquity of so-termed correlation networks~\cite[Chapter 7.3.1]{kolaczyk2009book} and covariance matrices as graph models for neuroimaging~\cite{bessadok2021graph}, user preference~\cite{huang2018}, financial~\cite{cardoso2020algorithms}, and gene expression data\cite[Chapter 7.3.4]{kolaczyk2009book}, to name a few timely domains. Summary of our contributions is as follows:

{\bf Stability of VNN:} For finite $n$, the eigenvectors of the sample covariance matrix $\hat\bC_n$ are perturbed from the eigenvectors of the ensemble covariance matrix $\bC$. Moreover, the principal components corresponding to eigenvalues that are close are unstable, i.e., small changes in the dataset may cause large changes in such principal components~\cite{joliffe1992principal} and subsequently, irreproducible statistical results~\cite{elhaik2022principal}.  Statistical learning approaches such as regression, classification, clustering that use PCA as the first step for dimension reduction inherit this instability. Our main theoretical contribution is to establish the~\emph{stability} of VNNs to perturbations in the sample covariance matrix in terms of sample size $n$. For this purpose, we leverage the perturbation theory for sample covariance matrices and show that for a given $\hat\bC_n$, the perturbation in the VNN output $\Phi(\hat\bC_n)$ with respect to the VNN output for $\bC$, $\Phi(\bC)$ satisfies $\|\Phi(\hat\bC_n) - \Phi(\bC)\| = {\cal O}(1/n^{\frac{1}{2}-\varepsilon})$ for some $\varepsilon < 1/2$ with high probability. Therefore, our analysis ties the stability of VNNs to the quality of sample covariance matrices, which is distinct from existing studies that study stability of GNNs under abstract, data-independent discrete or geometric perturbations to graphs~\cite{gama2020stability, keriven2020convergence}.

{\bf Empirical Validations:} Our experiments on real-world and synthetic datasets show that VNNs are indeed stable to perturbations in the sample covariance matrix. Moreover, on a multi-resolution dataset, we also illustrate the \emph{transferability} of VNN performance to datasets of different dimensions without re-training. These observations provide convincing evidence for the advantages offered by VNNs over standard PCA-based approaches for data analysis. 
\subsection{Related Work}
\textbf{GNNs}: GNNs broadly encapsulate all DL models adapted to graph data~\cite{wu2020comprehensive, abadal2021computing}. Recent survey articles in~\cite{wu2020comprehensive} and~\cite{abadal2021computing} categorize the variants of GNNs according to diverse criteria that include mathematical formulations, algorithms, and hardware/software implementations. Convolutional GNNs~\cite{zhang2019graph}, graph autoencoders~\cite{kipf2016variational}, recurrent GNNs, and gated GNNs~\cite{li2016gated} are among a few prominently studied and applied categories of GNNs. The taxonomy pertinent to this paper is that of graph convolutional network (GCN) that generalizes the concept of convolution to graphs. Specifically, VNNs are based on GCNs that model the convolution operation via graph convolutional filters~\cite{ruiz2021graph}. The graph convolutional filters are defined as polynomials over the graph representation where the order of the polynomial controls the aggregation of information over nodes in the graph~\cite{gama2020graphs}. We extend the definition of graph convolutional filters to filters over sample covariance matrices to define VNNs. The ubiquity of PCA and prevalence of covariance matrices across disciplines for data analysis motivate us to study VNNs independently of aforementioned GCNs. Moreover, the unique structure inherent to covariance matrices (relative to arbitrary graphs) and their sample-based perturbations facilitates sharper stability analyses, which offer novel insights.

\textcolor{black}{\textbf{Robust PCA:} Robust PCA is a variant of PCA that aims to recover the low rank representation of a dataset arbitrarily corrupted due to outliers~\cite{candes2011robust,5513535,xu2010robust}. The outliers considered in robust PCA approaches could manifest in the dataset due to missing data, adversarial behavior, defects or contamination in data collection. In contrast to such settings, here we consider the inherent statistical uncertainty in the sample covariance matrix, where ill-defined eigenvalues and eigenvectors of the sample covariance matrix can be the source of instability in statistical inference. Therefore, the notion of stability of VNNs in this paper is fundamentally distinct from the notion of stability or robustness in robust PCA approaches. }

\section{coVariance Filters}\label{fltrsec}
If $m$ dimensions of data $\hat\bx_n$ can be represented as the nodes of an $m$-node, undirected graph, the sample covariance matrix $\hat\bC_n$ is equivalent to its adjacency matrix. In GNNs, graph Fourier transform projects graph signals in the eigenspace of the graph and is leveraged to analyze graph convolutional filters~\cite{gama2020graphs}. Therefore, we start by formalizing the notion of coVariance Fourier transform (abbreviated as VFT). For this purpose, we leverage the eigendecomposition of $\hat\bC_n$ in~\eqref{sample_eig}.
\begin{definition} [coVariance Fourier Transform]
Consider a sample covariance matrix $\hat\bC_n$ as defined in~\eqref{sample_cov}. The coVariance Fourier transform (VFT) of a random sample $\bx$ is defined as its projection on the eigenspace of $\hat\bC_n$ and is given by
\begin{align}\label{vft}
    \tilde\bx \dff \bU^{\sf T} \bx\;.
\end{align}
\end{definition}
The $i$-th entry of $\tilde\bx$, i.e., $[\tilde\bx]_i$ represents the $i$-th Fourier coefficient and is associated with the eigenvalue~$w_i$. Note that the similarity between PCA and VFT implies that eigenvalue~$w_i$ encodes the variability of dataset $\bx_n$ in the direction of the principal component $\bu_i$. In this context, the eigenvalues of the covariance matrix are mathematical equivalent of the notion of graph frequencies in graph signal processing~\cite{ortega2018graph}. Next, we define the notion of coVariance filters (VF) that are polynomials in the covariance matrix.
\begin{definition}[coVariance Filter]
Given a set of real valued parameters $\{h_k\}_{k= 0}^{m}$, the coVariance filter for the covariance matrix $\hat\bC_n$ is defined as 
\begin{align} 
    \bH(\hat\bC_n) \dff \sum\limits_{k=0}^{m} h_k\hat\bC_n^k\;.
\end{align}
The output of the covariance filter  $\bH(\hat\bC_n)$ for an input $\bx$ is given by
\begin{align}\label{vf}
    \bz = \sum\limits_{k=0}^{m} h_k \hat\bC^k_n \bx = \bH(\hat\bC_n) \bx\;.
\end{align}
\end{definition}
 The coVariance filter $\bH(\hat\bC_n)$ follows similar analytic concepts of combining information in different neighborhoods as in the well-studied graph convolutional filters~\cite{gama2020graphs}. Moreover, the filter is defined by the parameters $\{h_k\}_{k=0}^m$ and therefore, for the ensemble covariance matrix $\bC$, the coVariance filter is given by $\bH(\bC)$. On taking the VFT of the output in~\eqref{vf} and leveraging~\eqref{sample_eig}, we have
\begin{align}
   \tilde \bz\dff  \bU^{\sf T} \bz &=  \bU^{\sf T} \sum\limits_{k=0}^{m} h_k [\bU \bW\bU^{\sf T}]^k \bx\enskip= \sum\limits_{k=0}^{m} h_k \bW^k \tilde\bx \label{vf3}\;,
\end{align}
where~\eqref{vf3} holds from the orthonormality of eigenvectors and definition of VFT in~\eqref{vft}. 
Using~\eqref{vf3}, we can further define the frequency response of the coVariance filter over the covariance matrix $\hat\bC_n$ in the domain of its principal components as
\begin{align}\label{vvf}
    h(w_i) = \sum\limits_{k=0}^{m} h_k w_i^k\;,
\end{align}
such that, from~\eqref{vf3} and~\eqref{vvf}, the $i$-th element of $\tilde \bz_i$ has the following relationship
\begin{align}\label{vfpca}
    [\tilde \bz]_i = h(w_i) [\tilde\bx]_i\;.
\end{align}
Equation~\eqref{vfpca} reveals that performing the coVariance filtering operation boils down to processing (e.g., amplifying or attenuating) the principal components of the data. This observation draws analogy with the linear-time invariant systems in signal processing where the different frequency modes (in this case, principal components) can be processed separately using coVariance filters, in a way determined by the frequency response values $h(w_i)$. 
For instance, using a narrowband coVariance filter whose frequency response is $h(\lambda) = 1$, if $\lambda = w_i$ and $h(\lambda) = 0$, otherwise, we recover the score corresponding to the projection of $\bx$ on $\bu_i$, i.e, the $i$-th principal component of $\hat\bC_n$. Therefore, there exist filterbanks of narrowband coVariance filters that enable the recovery of the PCA transformation. This observation is formalized in Theorem~\ref{pca_cf}. 
\begin{theorem}[coVariance Filter Implementation of PCA]\label{pca_cf}
Given a covariance matrix $\hat\bC_n$ with eigendecomposition in~\eqref{sample_eig}, if the PCA transformation of input $\bx$ is given by $\by = \bU^{\sf T} \bx$, there exists a filterbank of coVariance filters $\{\bH_i(\hat\bC_n): i\in \{1,\dots,m\}\}$, such that, the score of the projection of input $\bx$ on eigenvector~$\bu_i$ can be recovered by the application of a coVariance filter $\bH_i(\hat\bC_n)$ as:
\begin{align}
    [\by]_i = \bu_i^{\sf T}  \bH_i(\hat\bC_n) \bx \;, 
\end{align}
where the frequency response $h_i(\lambda)$ of the filter $\bH_i(\hat\bC_n)$ is given by
\begin{equation}
    h_i(\lambda) = \begin{cases}
     \eta_i ,\quad \text{if} \quad \lambda =w_i  \;,\\
     0, \quad \text{otherwise}
    \end{cases}\;.
\end{equation}
\end{theorem}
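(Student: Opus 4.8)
The plan is to exhibit the filters explicitly via polynomial interpolation on the spectrum of $\hat\bC_n$, and then verify the identity using the coVariance Fourier transform machinery already developed in \eqref{vf3}--\eqref{vfpca}. First I would record the spectral form of a coVariance filter: for any polynomial $h_i(\lambda) = \sum_{k=0}^{m} h_{i,k}\lambda^k$, the computation leading to \eqref{vf3} gives
\begin{align}\label{spec_form}
    \bH_i(\hat\bC_n) = \bU\,{\sf diag}\big(h_i(w_1),\dots,h_i(w_m)\big)\,\bU^{\sf T}\;,
\end{align}
so the filter is completely determined, as a linear operator, by the scalars $\{h_i(w_j)\}_{j=1}^m$. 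Consequently it suffices to produce, for each $i$, a polynomial of degree at most $m$ whose values on the eigenvalues are $h_i(w_j)=\eta_i$ whenever $w_j=w_i$ and $h_i(w_j)=0$ otherwise. Such a polynomial always exists: letting $\{\mu_1,\dots,\mu_r\}$ be the distinct eigenvalues among $w_1,\dots,w_m$ (so $r\le m$), the Lagrange interpolant
\begin{align}\label{lagr}
    h_i(\lambda) = \eta_i \prod_{\ell:\, \mu_\ell \neq w_i}\frac{\lambda - \mu_\ell}{w_i - \mu_\ell}
\end{align}
has degree $r-1\le m-1$ and realizes exactly the prescribed frequency response; its coefficients, padded with zeros up to order $m$, are the filter taps $\{h_{i,k}\}_{k=0}^m$.

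Next I would substitute this choice back. By \eqref{spec_form} and \eqref{pca_tr},
\begin{align}\label{subst}
    \bH_i(\hat\bC_n)\bx = \sum_{j=1}^m h_i(w_j)\,\bu_j\bu_j^{\sf T}\bx = \eta_i \sum_{j:\, w_j = w_i} \bu_j\,[\by]_j\;.
\end{align}
Left-multiplying by $\bu_i^{\sf T}$ and invoking orthonormality of the columns of $\bU$ (so $\bu_i^{\sf T}\bu_j = \delta_{ij}$, even within a repeated-eigenvalue block), all terms vanish except $j=i$, leaving $\bu_i^{\sf T}\bH_i(\hat\bC_n)\bx = \eta_i[\by]_i$; the normalization $\eta_i=1$ yields $[\by]_i = \bu_i^{\sf T}\bH_i(\hat\bC_n)\bx$, as claimed, and collecting $\{\bH_i(\hat\bC_n)\}_{i=1}^m$ gives the announced filterbank.

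The only delicate point is the treatment of repeated eigenvalues: a narrowband coVariance filter cannot resolve individual eigenvectors inside a degenerate eigenspace, so $\bH_i(\hat\bC_n)\bx$ is in general a linear combination of several $\bu_j$ as in \eqref{subst}. The argument nonetheless goes through because the final projection onto $\bu_i$ uses only orthonormality of $\bU$, not any spectral gap; I would state this step carefully (alternatively, one may simply assume the eigenvalues of $\hat\bC_n$ are distinct, which holds almost surely for continuous data distributions with $n\ge m$). A secondary, routine check is that the interpolation degree $r-1$ never exceeds the filter order $m$, which is immediate from $r\le m$.
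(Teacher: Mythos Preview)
Your argument is correct and follows the same spectral route the paper takes: the discussion preceding Theorem~\ref{pca_cf} already derives \eqref{vfpca}, i.e., $[\tilde\bz]_i = h(w_i)[\tilde\bx]_i$, and then simply asserts that a narrowband response $h(\lambda)=\mathbbm{1}\{\lambda=w_i\}$ recovers the $i$-th PCA score, without writing out a formal proof. Your proposal makes this rigorous in exactly the intended way, and adds two details the paper omits: the explicit Lagrange interpolant showing such a frequency response is realizable by a degree-$\le m$ polynomial, and the observation that repeated eigenvalues are harmless because the final inner product with $\bu_i$ relies only on orthonormality of $\bU$, not on a spectral gap.
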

Theorem~\ref{pca_cf} establishes equivalence between processing data samples with PCA and processing data samples with a specific polynomial on the covariance matrix. As we shall see in subsequent sections, the processing on a polynomial of covariance matrix has advantages in terms of stability with respect to the perturbations in the sample covariance matrix. The design of frequency response of different coVariance filters sufficient to implement PCA transformation according to Theorem~\ref{pca_cf} is shown figuratively in Appendix~\ref{vnn_arch}. If it is desired to have PCA-based data transformation be followed by dimensionality reduction or statistical learning tasks such as regression or classification, the coVariance filter-based PCA can be coupled with the post-hoc analysis model to have an end-to-end framework that enables the optimization of the parameters $\eta_i$ in the frequency response of the coVariance filters.

\section{coVariance Neural Networks (VNN)}
In this section, we propose coVariance Neural Network (VNN), which provides an end-to-end, non-linear, parametric mapping from the input data $\bx$ to any generic objective $\br$  and is defined as
\begin{align}
    \br = \Phi(\bx; \hat\bC_n, \cH)\;,
\end{align}
for sample covariance matrix $\hat\bC_n$ where $\cH$ is the set of filter coefficients that characterize the representation space defined by the mapping~$\Phi(\cdot)$. The VNN $\Phi(\bx;\hat\bC_n, \cH)$ may be formed by multiple layers, where each layer consists of two main components: i) a filter bank made of VFs similar to that in~\eqref{vf}; and ii) a pointwise non-linear activation function (such as ${\sf ReLU}, \tanh$). Therefore, in principle, the architecture of VNNs is similar to that of graph neural networks with the covariance matrix $\hat\bC_n$ as the graph shift operator~\cite{gama2020graphs}. We next define the coVariance perceptron, which forms the building block of a VNN and is equivalent to a 1-layer VNN. 
\begin{definition}[coVariance Perceptron]
Consider a dataset with the sample covariance matrix $\hat \bC_n$. For a given non-linear activation function $\sigma(\cdot)$, input $\bx$, a coVariance filter~${\bH( \hat\bC_n) = \sum_{k=0}^m h_k \hat\bC_n^k}$ and its corresponding coefficient set $\cH $, the coVariance perceptron is defined as
\begin{align}\label{1l}
 \Phi(\bx; \hat\bC_n, \cH) \dff \sigma(\bH( \hat\bC_n)\bx)\;.
\end{align}
\end{definition}
The VNN can be constructed by cascading multiple layers of coVariance perceptrons (shown in Appendix~\ref{vnn_arch} in the Supplementary file). Note that the non-linear activation functions across different layers allow for non-linear transformations, thus, increasing the expressiveness of VNNs beyond linear mappings such as coVariance filters. Furthermore, similar to GNNs, we can further increase the representation power of VNNs by incorporating multiple parallel inputs and outputs per layer enabled by filter banks at every layer~\cite{gama2020graphs, goodfellow2016deep}. In this context, we remark that the definition of a one layer perceptron in~\eqref{1l} can be expanded to the following.
\begin{remark}[coVariance Perceptron with Filter Banks]\label{fltrbankvf}
Consider a coVariance perceptron with $F_{\sf in}$ $m$-dimensional inputs and $F_{\sf out}$ $m$-dimensional outputs. Denote the input at the perceptron by $\bx_{\sf in} = \Big[\bx_{\sf in}[1], \dots,\bx_{\sf in}[F_{\sf in} ]\Big]$ and the output at the perceptron by  $\bx_{\sf out} = \Big[\bx_{\sf out}[1], \dots,\bx_{\sf out}[F_{\sf out} ]\Big]$. Each input $\bx_{\sf in}[g], \forall g\in \{1,\dots, F_{\sf in} \}$ is processed by $F_{\sf out}$ coVariance filters in parallel. For $f\in \{1,\dots,F_{\sf out} \}$, the $f$-th output in $\bx_{\sf out}$ is given by
\begin{align}
   \bx_{\sf out}[f] &=  \sigma\left(\sum\limits_{g = 1}^{F_{\sf in} } \bH_{fg}(\hat\bC_n)\bx_{\sf in} [g] \right)\label{vfbnk}\enskip = \Phi(\bx_{\sf in};\hat\bC_n,\cH_f)\;,
\end{align}
where $\cH_f$ is the set of all filter coefficients in coVariance filters~$[\bH_{fg}(\hat\bC_n)]_{g}$ in~\eqref{vfbnk}. 
\end{remark}
Therefore, the VNN with filter bank implementation deploys $F_{\sf in}\times F_{\sf out} $ number of VFs in a layer defined by the covariance perceptron in Remark~\ref{fltrbankvf}. Next, we note that the definitions of coVariance filter in Section~\ref{fltrsec} and VNN are with respect to the sample covariance matrix $\hat\bC_n$. However, due to finite sample size effects, the sample covariance matrix will be a perturbed version of the ensemble covariance matrix $\bC$. PCA-based approaches that rely on eigendecomposition of the sample covariance matrix can potentially be highly sensitive to such perturbations. Specifically, if small changes are made to the dataset $\bx_n$, certain ill-defined eigenvalues and eigenvectors can induce instability in the outcomes of PCA-based statistical learning models~\cite{joliffe1992principal}. Therefore, it is desirable that the designs of coVariance filters and VNNs are not sensitive to random perturbations in the sample covariance matrix as compared to the ensemble covariance matrix $\bC$. In this context, we study the stability of coVariance filters and VNNs in the next section.

\section{Stability Analysis} \label{stability_all}
We start with the stability analysis of coVariance filters, which will also  be instrumental in establishing the stability of VNNs. Our results will leverage the eigendecomposition of $\bC$, given by
\begin{align}\label{eigenC}
    \bC = \bV \Lambda \bV^{\sf T}
\end{align}
where $\bV \in \mR^{m\times m}$ is the matrix of eigenvectors such that, $\bV = [\bv_1,\dots,\bv_m]$ and ${\Lambda = {\sf diag}(\lambda_1, \dots,\lambda_m)}$ is the diagonal matrix of eigenvalues, such that, $\lambda_1\geq \cdots\geq \lambda_m$.


\subsection{Stability of coVariance Filters}
To study the stability of coVariance filters, we analyze the effect of statistical uncertainty induced in the sample covariance matrix with respect to the ensemble covariance matrix on the coVariance filter output. To this end, we compare the outputs $\bH(\bC) \bx$ and $\bH(\hat\bC_n)\bx$ for any random instance $\bx$ of $\bX$. Without loss of generality, we present our results over instances of $\bX$ where $\|\bX\| \leq 1$. We also consider the following assumption on the frequency response.

 {\bf Assumption}: The frequency response of filter $\bH(\bC)$ satisfies:
 \begin{align}\label{fltr}
     |h(\lambda_i) - h(\lambda_j)| \leq M\frac{|\lambda_i - \lambda_j|}{k_i}\;,
 \end{align}
 where $k_i \dff \sqrt{\mE[\|\bX\bX^{\sf T} \bv_i\|^2] - \lambda_i^2}$, for some constant $M>0$ and for all non-zero eigenvalues $\lambda_i \neq \lambda_j, i,j \in \{1,\dots,m\}$ of $\bC$. Here, $k_i$ is a measure of kurtosis of the distribution of $\bX$.

 Next, we provide the result that establishes the stability of coVariance filters in Theorem~\ref{filterstab}. The impact of the assumption in~\eqref{fltr} on the filter stability is discussed subsequently.  To present the results in Theorem~\ref{filterstab}, we also use the following definitions:
 \begin{align}\label{addefs}
     k_{\sf min} \dff \min_{i\in \{1,\dots,m\}, \lambda_i>0} k_i \quad\text{and} \quad \kappa \dff \max_{i,j: \lambda_i \neq \lambda_j} \frac{k_i^2}{|\lambda_i - \lambda_j|}\;.
 \end{align}

\begin{theorem}[Stability of coVariance Filter]\label{filterstab}
Consider a random vector $\bX \in \mR^{m\times 1}$ , such that, its corresponding covariance matrix is given by $\bC = \mE[(\bX - \mE[\bX]) (\bX - \mE[\bX])^{\sf T}]$. For a sample covariance matrix $\hat\bC_n$ formed using $n$ i.i.d instances of $\bX$ and a random instance $\bx$ of $\bX$, such that, $\|\bx\| \leq 1$ and under assumption~\eqref{fltr}, the following holds with probability at least $1 - {n^{-2\varepsilon}} - 2\kappa m/n$ for any $\varepsilon \in (0,1/2]$:
\begin{align}\label{filterstab_rslt}
    \left\lVert \bH(\hat\bC_n) - \bH(\bC)\right\rVert = \frac{M}{n^{\frac{1}{2} - \varepsilon}}\cdot{\cal O}\left({\sqrt{m}} + \frac{\|\bC\|\sqrt{\log mn}}{k_{\sf min}n^{2\varepsilon}}\right)\;.
\end{align}
\end{theorem}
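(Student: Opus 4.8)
The plan is to treat $\bE \dff \hat\bC_n - \bC$ as a perturbation and expand $\bH(\hat\bC_n) - \bH(\bC) = \sum_{k=1}^m h_k(\hat\bC_n^k - \bC^k)$ by telescoping, $\hat\bC_n^k - \bC^k = \sum_{s=0}^{k-1}\hat\bC_n^s\,\bE\,\bC^{k-1-s}$. Substituting the eigendecomposition $\bC = \sum_i \lambda_i \bv_i\bv_i^{\sf T}$ and collecting powers of $\hat\bC_n$, the factor $\bC^{k-1-s}$ contributes $\lambda_i^{k-1-s}\bv_i\bv_i^{\sf T}$, so that
\begin{align}
\bH(\hat\bC_n) - \bH(\bC) = \sum_{i=1}^m g_i(\hat\bC_n)\,\bE\,\bv_i\bv_i^{\sf T}, \qquad g_i(x) \dff \frac{h(\lambda_i) - h(x)}{\lambda_i - x},
\end{align}
where $g_i$ is the divided difference of the polynomial frequency response $h$ anchored at $\lambda_i$, itself a polynomial of degree $m-1$ in $\hat\bC_n$. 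Right-multiplying by the orthogonal eigenvector matrix $\bV = [\bv_1,\dots,\bv_m]$ leaves the spectral norm unchanged, so bounding the operator norm by the Frobenius norm gives
\begin{align}
\|\bH(\hat\bC_n) - \bH(\bC)\| = \big\|[\,g_1(\hat\bC_n)\bE\bv_1 \;\cdots\; g_m(\hat\bC_n)\bE\bv_m\,]\big\| \le \Big(\textstyle\sum_{i=1}^m \|g_i(\hat\bC_n)\|^2\,\|\bE\bv_i\|^2\Big)^{1/2}.
\end{align}

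Next I would control the two families of factors separately. For $\|\bE\bv_i\|$: modulo the lower-order correction from centering by the sample mean, $\bE\bv_i = \tfrac1n\sum_{\ell=1}^n(\bx_\ell\bx_\ell^{\sf T}\bv_i - \bC\bv_i)$ is an empirical average of i.i.d. zero-mean vectors whose per-sample second moment is exactly $\mE[\|\bX\bX^{\sf T}\bv_i\|^2] - \lambda_i^2 = k_i^2$; hence $\mE\|\bE\bv_i\|^2 = k_i^2/n$ and $\mE\big[\sum_i k_i^{-2}\|\bE\bv_i\|^2\big] = m/n$, so Markov's inequality yields $\sum_i k_i^{-2}\|\bE\bv_i\|^2 \le m/n^{1-2\varepsilon}$ on an event of probability at least $1 - n^{-2\varepsilon}$. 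For $\|g_i(\hat\bC_n)\|$: since $g_i(\hat\bC_n) = \bU\,\mathrm{diag}(g_i(w_\ell))\,\bU^{\sf T}$, this equals $\max_\ell |g_i(w_\ell)|$, and Assumption~\eqref{fltr} bounds $|g_i(\lambda_\ell)| = |h(\lambda_i)-h(\lambda_\ell)|/|\lambda_i-\lambda_\ell|$ (and $|h'(\lambda_i)|$ for coincident eigenvalues) directly by $M/k_i$; I would then pass from $\lambda_\ell$ to the perturbed eigenvalue $w_\ell$ via Weyl's inequality $|w_\ell - \lambda_\ell| \le \|\bE\|$ and the Lipschitz property of the polynomial $g_i$ on the compact interval containing both spectra, obtaining $\|g_i(\hat\bC_n)\| \le M/k_i + (\text{correction} \propto \|\bE\|)$.

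To finish I would invoke a concentration bound for the sample covariance matrix (matrix Bernstein or a Chebyshev-type bound on $\|\hat\bC_n - \bC\|$), giving $\|\bE\| = {\cal O}\!\big(\|\bC\|\sqrt{\log(mn)/n}\big)$ on an event of probability at least $1 - 2\kappa m/n$, with $\kappa$ and $k_{\sf min}$ entering through how tightly $\|\bE\|$ must be controlled relative to the eigengaps $|\lambda_i - \lambda_j|$ for the divided-difference estimates to remain valid. Plugging the two high-probability bounds into the Frobenius estimate, the dominant term is $\big(\sum_i (M/k_i)^2\|\bE\bv_i\|^2\big)^{1/2} \le M\sqrt m/n^{1/2-\varepsilon}$, while the cross terms carrying the $\|\bE\|$-correction to $\|g_i(\hat\bC_n)\|$ assemble into the advertised lower-order contribution; a union bound over the two failure events yields the stated probability $1 - n^{-2\varepsilon} - 2\kappa m/n$. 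The main obstacle I anticipate is the third step: making the passage from the true eigenvalues $\lambda_\ell$ to the perturbed eigenvalues $w_\ell$ in $\|g_i(\hat\bC_n)\|$ quantitative and uniform in $i$ when some eigengaps are comparable to $\|\bE\|$ — this is precisely where $\kappa$, $k_{\sf min}$, and the logarithmic factor are forced upon us, and verifying that this correction stays genuinely lower order than $M\sqrt m/n^{1/2-\varepsilon}$ is the delicate part of the accounting.
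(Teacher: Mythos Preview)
Your decomposition is genuinely different from the paper's, and the leading-term argument via $\mE\|\bE\bv_i\|^2 = k_i^2/n$ plus Markov is elegant. The paper instead uses the \emph{first-order} expansion $(\bC+\bE)^k \approx \bC^k + \sum_r \bC^r\bE\bC^{k-r-1}$ (both flanking factors are $\bC$, not $\hat\bC_n$), then writes $\bE\bv_i = (\lambda_i\bI-\bC)\delta\bv_i + \delta\lambda_i\bv_i + (\text{second order})$ with $\delta\bv_i=\bu_i-\bv_i$, $\delta\lambda_i=w_i-\lambda_i$. Term~1 yields divided differences $h(\lambda_i)-h(\lambda_j)$ multiplied by the eigenvector alignments $\bv_j^{\sf T}\bu_i$, which are controlled by a result of Loukas: $|\bv_j^{\sf T}\bu_i|\le 2k_i/(n^{1/2-\varepsilon}|\lambda_i-\lambda_j|)$ with probability $1-n^{-2\varepsilon}$, conditional on an eigenvalue-separation event whose failure probability is exactly the $2\kappa m/n$ in the statement. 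Term~2 gives $h'(\lambda_i)\delta\lambda_i$ and is handled by Weyl plus matrix concentration of $\|\bE\|$; this is where $\|\bC\|\sqrt{\log mn}/k_{\sf min}$ enters. So in the paper the filter response is only ever evaluated at the \emph{true} eigenvalues, and all the ``perturbation'' is pushed into $\bv_j^{\sf T}\bu_i$ and $\delta\lambda_i$.

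The obstacle you flag is a genuine gap in your route as written. Because you use the exact telescoping $\hat\bC_n^s\bE\bC^{k-1-s}$, the operator $g_i(\hat\bC_n)$ must be evaluated at the \emph{sample} eigenvalues $w_\ell$, but assumption~\eqref{fltr} only constrains the divided differences $g_i(\lambda_j)=(h(\lambda_i)-h(\lambda_j))/(\lambda_i-\lambda_j)$ at the population eigenvalues. The polynomial $g_i$ has degree $m-1$ and nothing in the hypotheses bounds its Lipschitz constant between the $\lambda_j$'s, so your proposed ``Weyl plus Lipschitz of $g_i$'' step is not justified; $g_i(w_\ell)$ could be arbitrarily large while~\eqref{fltr} still holds. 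The simple repair is to swap your exact telescoping for the first-order expansion with both factors $\bC$ (absorbing the difference into an $O(\|\bE\|^2)$ remainder, as the paper does). Then $g_i(\hat\bC_n)$ becomes $g_i(\bC)$, whose norm is $\max_j|g_i(\lambda_j)|\le M/k_i$ directly from~\eqref{fltr}, and your Markov bound on $\sum_i k_i^{-2}\|\bE\bv_i\|^2$ delivers the leading $M\sqrt m/n^{1/2-\varepsilon}$ term without any eigenvector perturbation theory---arguably cleaner than the paper's Term~1. One caveat: your attribution of the $2\kappa m/n$ failure probability to a matrix-Bernstein bound on $\|\bE\|$ is not what happens in the paper; there it arises from the prerequisite event in Loukas's eigenvector-alignment inequality, and in your (repaired) approach it would not appear in the same form.
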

\begin{proof}
See Appendix~\ref{stab_pf}. 
\end{proof}
The right-hand side term in~\eqref{filterstab_rslt} and the conditions in the assumption in~\eqref{fltr} are obtained by the analysis of the finite sample size effect-driven perturbations in $\hat\bC_n$ and its eigenvectors with respect to that in $\bC$. From Theorem~\ref{filterstab}, we note that $ \|\bH(\hat\bC_n) - \bH(\bC)\|$ decays with the number of samples $n$ at least at the rate of $1/n^{\frac{1}{2} - \varepsilon}$. Thus, we conclude that the stability of the coVariance filter improves as the number of samples $n$ increases. This observation is along the expected lines as the estimate $\hat \bC_n$ becomes closer to the ensemble covariance matrix $\bC$ by the virtue of the law of large numbers. Next, we briefly discuss two aspects of the assumption in~\eqref{fltr}. Note that the upper bound in~\eqref{fltr} controls the variability of the frequency response $h(\lambda)$ with respect to $\lambda$. For any pair of eigenvalues $\lambda_i$ and $\lambda_j$ of $\bC$, this variability is tied to the eigengap $|\lambda_i - \lambda_j|$ and the factor $k_i$. We discuss this next.

\textbf{Discriminability between close eigenvalues}:
Firstly, for a given $k_i$ and eigenvalue $\lambda_i$, the response of the filter for any eigenvalue $\lambda_j, j\neq i$ becomes closer to $h(\lambda_i)$ if $|\lambda_i - \lambda_j|$ decreases. From the perturbation theory of eigenvectors and eigenvalues of sample covariance matrices, we know that the sample-based estimates of the eigenspaces corresponding to eigenvalues $\lambda_i$ and $\lambda_j$ become harder to distinguish as $|\lambda_i - \lambda_j|$ decreases~\cite{loukas2017close}. Therefore, the coVariance filter that satisfies~\eqref{fltr} sacrifices discriminability between close eigenvalues to preserve its stability with respect to the statistical uncertainty inherent in the sample covariance matrix.

\textbf{Stability with respect to kurtosis and estimation quality}: Since we have ${\mE[\|\bX\bX^{\sf T} \bv_i\|^2] \leq \mE[\|\bX\|^4]}$, the factor $k_i$ is tied to the measure of kurtosis of the underlying distribution of $\bX$ in the direction of $\bv_i$. Distributions with high kurtosis tend to have heavier tails and more outliers. Therefore, smaller $k_i$ indicates that the distribution of $\bX$ has a fast decaying tail in the direction of $\bv_i$ which allows for a more accurate estimation of $\lambda_i$ and $\bv_i$. We refer the reader to~\cite[Section 4.1.3]{loukas2017close} for additional details. In the context of coVariance filters, we note that the upper bound in~\eqref{fltr} is more liberal if $\lambda_i$ is associated with a smaller $k_i$ or equivalently, the distribution of $\bX$ has a smaller kurtosis in the direction of $\bv_i$. This observation implies that if $\lambda_i$ and $\bv_i$ are `easier' to estimate, the frequency response for $\lambda_j$ in the vicinity of $\lambda_i$ is less constrained. 

\subsection{Stability of coVariance Neural Networks}
The stability of VNNs is analyzed by comparing $\Phi(\bx; \hat\bC_n, \cH)$ and $\Phi(\bx;\bC,{\cal H})$. Note that stable graph convolutional filters imply the stability of GNNs for different perturbation models~\cite{gama2020stability}. In VNNs, the perturbations are derived from the finite sample effect in sample covariance matrix, which is distinct from the data-independent perturbation models considered in the existing literature on GNNs, and allow us to relate sample size $n$ with VNN stability. We formalize the stability of VNNs under the assumption of stable coVariance filters in Theorem~\ref{vnn_stab}. For this purpose, we consider a VNN $\Phi(\cdot;\cdot,{\cal H})$ with number of $m$-dimensional inputs and outputs per layer as $F$ and $L$ layers, with the filter bank given by ${\cal H} = \{\bH_{fg}^{\ell}\}, \forall f,g \in \{1,\dots, F\}, \ell \in \{1,\dots,L\}$. 

\begin{theorem}[Stability of VNN]\label{vnn_stab}
Consider a sample covariance matrix $\hat\bC_n$ and the ensemble covariance matrix $\bC$. Given a bank of coVariance filters $\{\bH_{fg}^{\ell}\}$, such that ${|h_{fg}^{\ell}(\lambda) |\leq 1}$ and a pointwise non-linearity $\sigma(\cdot)$, such that, $|\sigma(a) - \sigma(b)|\leq |a-b|$, if the covariance filters satisfy
\begin{align}
    \|\bH_{fg}^{\ell}(\hat\bC_n)- \bH_{fg}^{\ell}(\bC)\| \leq \alpha_n \;,
\end{align}
for some $\alpha_n > 0$, then, we have
\begin{align}
    \|\Phi(\bx;\hat\bC_n, \cH)-\Phi(\bx;\bC,\cH)\| \leq LF^{L-1} \alpha_n\;.
\end{align}
\end{theorem}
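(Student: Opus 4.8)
The plan is to establish the bound by induction on the number of layers $L$, pushing a worst-case perturbation estimate from the input through the cascade. For a layer index $\ell$, let $\bx^{\ell}[g]$ and $\hat\bx^{\ell}[g]$ denote the $g$-th $m$-dimensional feature vector produced at the output of layer $\ell-1$ (equivalently, the input to layer $\ell$) when the VNN is run on $\bC$ and on $\hat\bC_n$, respectively, with $\bx^{1}[1] = \hat\bx^{1}[1] = \bx$ the single $m$-dimensional input and $\bx^{L+1}[\cdot]$ the VNN output. I will track two scalar sequences along the layers: a bound $B_{\ell}$ on $\max_f \|\hat\bx^{\ell+1}[f]\|$ and a bound $D_{\ell}$ on the deviation $\max_f \|\hat\bx^{\ell+1}[f] - \bx^{\ell+1}[f]\|$, and conclude since $\|\Phi(\bx;\hat\bC_n,\cH) - \Phi(\bx;\bC,\cH)\| \le D_{L}$.

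Two facts drive the per-layer step. First, because $\bC$ (and $\hat\bC_n$) is symmetric, any coVariance filter is diagonalized in the corresponding eigenbasis, so $\|\bH_{fg}^{\ell}(\bC)\| = \max_i |h_{fg}^{\ell}(\lambda_i)| \le 1$ and $\|\bH_{fg}^{\ell}(\hat\bC_n)\| \le 1$ follow from the hypothesis $|h_{fg}^{\ell}(\lambda)|\le 1$. Second, the componentwise inequality $|\sigma(a)-\sigma(b)|\le|a-b|$ upgrades to $\|\sigma(\bu)-\sigma(\bv)\|\le\|\bu-\bv\|$, and since $\sigma(0)=0$ for ${\sf ReLU}$ and $\tanh$, also to $\|\sigma(\bu)\|\le\|\bu\|$. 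Combining these with the layer recursion~\eqref{vfbnk}, adding and subtracting $\sum_g \bH_{fg}^{\ell}(\bC)\hat\bx^{\ell}[g]$, and applying the triangle inequality gives, for every output feature $f$,
\begin{align*}
\|\hat\bx^{\ell+1}[f]-\bx^{\ell+1}[f]\|
&\le \Big\|\sum_{g}\bH_{fg}^{\ell}(\hat\bC_n)\hat\bx^{\ell}[g]-\sum_{g}\bH_{fg}^{\ell}(\bC)\bx^{\ell}[g]\Big\| \\
&\le \sum_{g}\big\|\bH_{fg}^{\ell}(\hat\bC_n)-\bH_{fg}^{\ell}(\bC)\big\|\,\|\hat\bx^{\ell}[g]\|+\sum_{g}\big\|\bH_{fg}^{\ell}(\bC)\big\|\,\|\hat\bx^{\ell}[g]-\bx^{\ell}[g]\| \\
&\le F\alpha_n B_{\ell-1}+F D_{\ell-1},
\end{align*}
while the same steps with the deviation term dropped yield $\|\hat\bx^{\ell+1}[f]\|\le F B_{\ell-1}$, i.e.\ $B_{\ell}\le F B_{\ell-1}$.

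It then remains to unroll the two coupled recursions. With a single input feature one has $B_{0}=\|\bx\|\le 1$, $B_{1}\le 1$, and $B_{\ell}\le F^{\ell-1}$; likewise $D_{0}=0$, $D_{1}\le\alpha_n$, and $D_{\ell}\le F^{\ell-1}\alpha_n + F D_{\ell-1}$, which telescopes to $D_{L}\le\sum_{\ell=1}^{L}F^{L-1}\alpha_n = LF^{L-1}\alpha_n$, completing the argument (the single output feature of the cascade is what converts the last $B$-factor into $F^{L-1}$ rather than $F^{L}$). This is the filter-to-network bootstrap familiar from GNN stability analysis~\cite{gama2020stability}; the only delicate part is the bookkeeping of the nested recursions — keeping the powers of $F$ correct by tracking the feature counts at the input, hidden, and output layers — together with the elementary but essential observation that the filter operator norms are bounded by $1$ on the spectra of both $\bC$ and $\hat\bC_n$, which is what lets the term carrying the accumulated error from earlier layers pass through each subsequent filter without amplification.
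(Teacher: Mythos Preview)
Your proof is correct and is exactly the layer-by-layer recursion the paper defers to by citing~\cite[Appendix~E]{gama2020graphs}; the paper itself offers no independent argument beyond that reference. One small remark: in your closing parenthetical it is the single \emph{input} feature (which gives $B_1\le 1$ rather than $B_1\le F$, hence $B_\ell\le F^{\ell-1}$) that accounts for the exponent $L-1$ instead of $L$, not the output---the quantity $D_L$ you compute is already a per-output-feature bound, so the number of output channels does not enter.
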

The proof of Theorem~\ref{vnn_stab} follows from~\cite[Appendix E]{gama2020graphs} for any generic $\alpha_n$. The parameter $\alpha_n$ represents the finite sample effect on the perturbations in $\hat\bC_n$ with respect to $\bC$. From Theorem~\ref{filterstab}, we note that $\alpha_n$ scales as $1/n^{\frac{1}{2} - \varepsilon}$ with respect to the number of samples $n$ for the coVariance filters whose frequency response depends on the eigengap and kurtosis of the underlying distribution of the data in~\eqref{fltr}. Furthermore, the stability of a VNN decreases with increase in number of $m$-dimensional inputs and outputs per layer $F$ and number of layers $L$, which is consistent with the stability properties of GNNs. Therefore, our results present a more holistic perspective to the stability of VNNs than that possible for generic GNNs. 
\textcolor{black}{
\begin{remark}[Computational Complexity of VNN]\label{comp}
For a coVariance perceptron defined in~(14), the computational cost is given by $O(m^2 T F_{\sf in}F_{\sf out})$, where $T\leq m$ is the maximum number of filter taps in its associated filter bank.
\end{remark}}
\textcolor{black}{From Remark~\ref{comp}, we note that the computational complexity of VNNs can be prohibitive for large~$m$. However, oftentimes sparsity is imposed as a regularization to estimate high-dimensional correlation matrices; see e.g.~\cite{bien2011sparse}. As a result, the computational complexity becomes $O(|E|TF_{\sf in}F_{\sf out})$, where $|E|$ is the number of non-zero correlations (edges in the covariance graph) and can be markedly smaller than $m^2$. Since VNN architecture is analogous to that of GNN, the property of GNN transferability (see ~\cite{ruiz2020graphon}) across different sized graphs can also establish scalability of VNNs to high-dimensional datasets for settings where multi-resolution datasets are available.} In the next section, we empirically validate our theoretical results on the stability of VNNs. Moreover, on a set of multi-resolution datasets, we also empirically evaluate VNNs for transferability. 

\section{Experiments}
In this section, we discuss our experiments on different datasets. Primarily, we evaluate VNNs on a regression problem on different neuroimaging datasets curated at University of Pennsylvania, where we regress human chronological age (time since birth) against cortical thickness data. Cortical thickness is a measure of the gray matter width and it is well established that cortical thickness varies with healthy aging~\cite{thambisetty2010longitudinal}. Additional details on the neurological utility of this experiment are included in Appendix~\ref{add_data}. Brief descriptions of these datasets are provided next. 



 {\bf ABC Dataset}: ABC dataset consists of the cortical thickness data collected from a heterogeneous population of $n=341$ subjects (mean age $=68.1$ years, standard deviation $=13$)  that consists of healthy adults, and subjects with mild cognitive impairment or Alzheimer's disease. For each individual, joint-label fusion~\cite{wang2012multi} was used to quantify mean cortical thickness in $m=104$ anatomical parcellations. Therefore, for every subject, we have a 104 dimensional vector whose entries correspond to the cortical thickness in distinct brain regions.

{\bf Multi-resolution FTDC Datasets}: FTDC Datasets consist of the cortical thickness data from $n=170$ healthy subjects (mean age $=64.26$ years, standard deviation $=8.26$). For each subject, the cortical thickness data is extracted according to a multiresolution Schaefer parcellation~\cite{schaefer2018local}, at 100 parcel, 300 parcel, and 500 parcel resolutions. Therefore, for each subject, we have the cortical thickness data consisting of $m=100$ features, $m=300$ features or $m=500$ features, with the higher number of features providing the cortical thickness data of a brain at a finer resolution. We leverage the different resolutions of data available to form three datasets:  FTDC100, FTDC300, and FTDC500, which form the cortical thickness datasets corresponding to 100, 300, and 500 features resolutions, respectively.

Our primary objective is to illustrate the stability and transferability of VNNs that also imply advantages over traditional PCA-based approaches. Hence, we use PCA-based regression as the primary baseline for comparison against VNNs. We use the ABC dataset to demonstrate the higher stability of VNNs over PCA-based regression models in Section~\ref{perturb}. In Section~\ref{transfer}, we use the FTDC datasets to demonstrate transferability of VNN performance across the multi-resolution datasets without re-training. The experiments in Section~\ref{transfer} clearly lay beyond the scope of PCA-based statistical learning models. Details on data and code availability for the experiments are included in Appendix~\ref{dataavail}.


\subsection{Stability against Perturbations in Sample Covariance Matrix}\label{perturb}
In this section, we evaluate the stability or robustness of the trained VNN and PCA-regression models against perturbations in the sample covariance matrix used in training. To this end, we first train nominal VNN and PCA-regression models. The effects of perturbations in the sample covariance matrix on the performance of nominal models are evaluated subsequently. We first describe the experiment designs for nominal models based on VNN and PCA-regression. 

{\bf VNN Experiments}: We randomly split ABC dataset into a $90/10$ train/test split. The sample covariance matrix is formed from $307$ samples in the training set, i.e., we have $\hat\bC_{307}$ of size $104\times 104$. The VNN consists of 2 layers with $2$ filter taps each, a filter bank of $13$ $m$-dimensional outputs per layer for $m=104$ dimensions of the input data, and a readout layer that calculates the unweighted mean of the outputs at the last layer to form an estimate for age. The hyperparameters for the VNN architecture and learning rate of the optimizer in this experiments and all subsequent VNN experiments in this section are chosen by a hyperparameter optimization framework called Optuna~\cite{akiba2019optuna}. The training set is randomly subdivided into subsets of $273$ and $34$ samples, such that, the VNN is trained with respect to the mean squared error loss between the predicted age and the true age in $273$ samples. The loss is optimized using batch stochastic gradient descent with Adam optimizer available in PyTorch library~\cite{paszke2019pytorch} for up to $100$ epochs. The learning rate for the optimizer is set to $0.0151$. The VNN model with the best minimum mean squared error performance on the remaining $34$ samples (which acts as a validation set) is included in the set of nominal models for this permutation of the training set.  
\begin{figure}[t]
  \centering
  \includegraphics[scale=0.4]{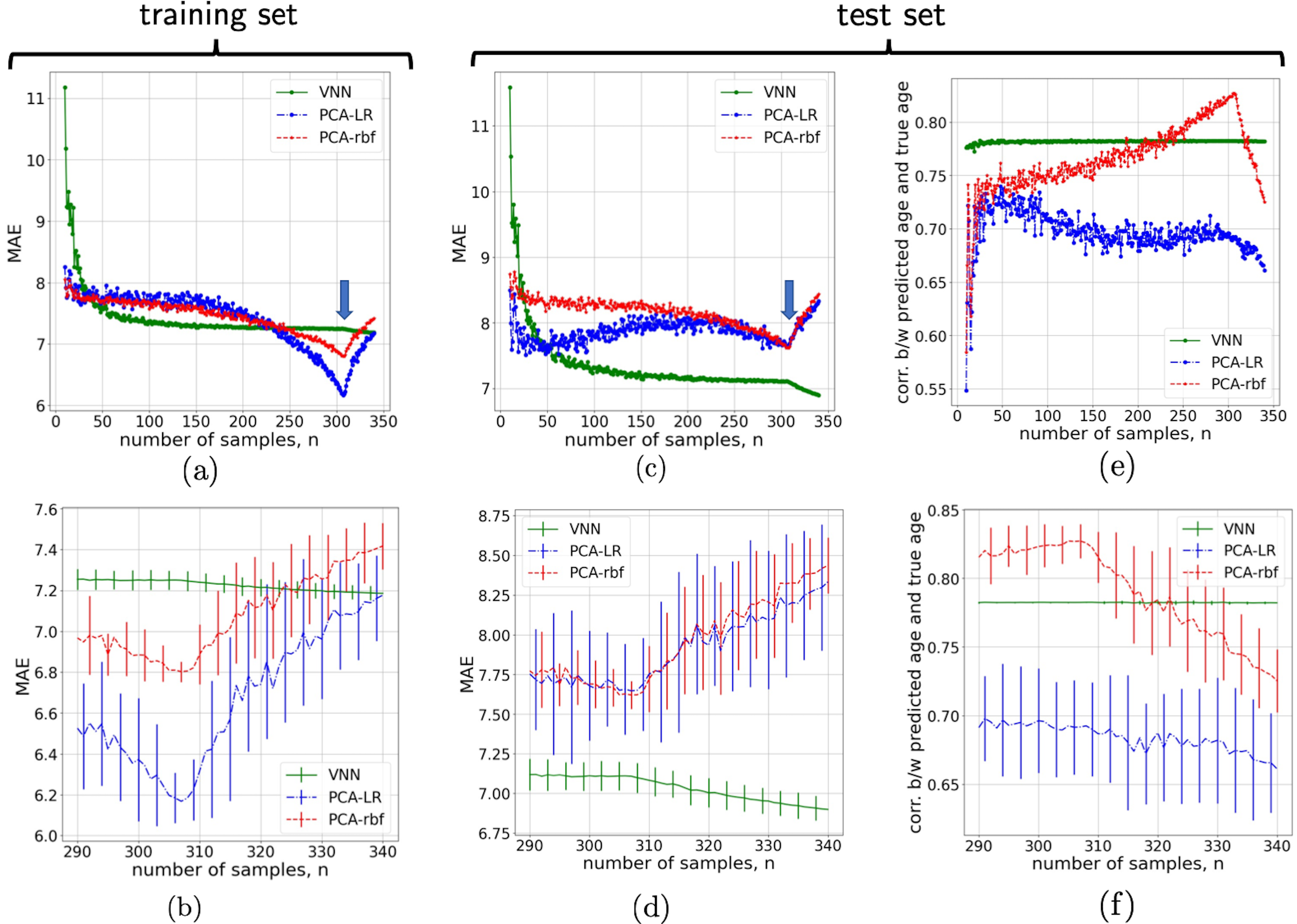}
   \caption{Stability of regression performance for VNN and PCA-regression models trained on $\hat\bC_{307}$ formed from ABC dataset. Panels (a) and (b) illustrate the performance variation on training set for the VNN and PCA-regression models when sample covariance matrix $\hat\bC_{307}$ is perturbed by addition or removal of samples ($n=307$ marked by blue arrow in (a)). Panels (c)-(f) correspond to the variation in performances in terms of MAE and correlation between predicted age and true age only on the test set. The point $n = 307$ is marked with a blue arrow in panel (c). Panels (c) and (d) illustrate the variation in mean MAE performance of the VNN and PCA-regression models trained using $\hat\bC_{307}$ on the test set ($n=307$ marked by blue arrow in (c))  with panel (d) zooming in on the range $290-341$ with error bars included. Panel (e) illustrates the variation in correlation between true age and predicted age by the VNN and PCA-regression models with panel (f) zooming in on the range $290-341$ with error bars included.  } 
   \label{perturbfig}
\end{figure}

{\bf PCA-regression}: The PCA-regression pipeline consists of two steps: i) we first identify the principle components using the eigendecomposition of the sample covariance matrix $\hat\bC_{307}$; and then, ii) to maintain consistency with VNN, transform the $273$ samples from the training set used for VNN training to fit to the corresponding age data using a regression model. Regression is implemented using sklearn package in python~\cite{pedregosa2011scikit} with `linear' and radial basis function (`rbf') kernels. PCA-regression with `rbf' kernel enables us to accommodate non-linear relationships between cortical thickness and age. PCA-regression with linear kernel in the regression model is referred to as {\sf PCA-LR} and that with `rbf' kernel in the regression model is referred to as {\sf PCA-rbf}. The optimal number of principal components in the PCA-regression pipeline are selected through a 10-fold cross-validation procedure on the training set, repeated 5 times.

For $100$ random permutations of the training set in VNN and PCA-regression experiments, we form a set of $100$ nominal models and evaluate their stability. To evaluate the stability of VNN, we replace the sample covariance matrix $\hat\bC_{307}$ with $\hat\bC_{n'}$ for $n' \in [5,341], n'\neq 307$. For PCA-regression models, we re-evaluate the principal components corresponding to $\hat\bC_{n'}$ to transform the training data while keeping the regression model learnt for PCA transformation from $\hat\bC_{307}$ fixed. Clearly, $\hat\bC_{n'}$ will be perturbed with respect to $\hat\bC_{307}$ due to finite sample size effect.



For each nominal model, we evaluate the model performances in terms of mean absolute error (MAE) and correlation between predicted age and true age for the training set and the test set.  Figures~\ref{perturbfig}~a)-b) illustrate the variation in MAE performance with perturbations to $\hat\bC_{307}$ on the training set. Figures~\ref{perturbfig}~c)-f) correspond to similar results on the test set for MAE and correlation. The mean performance in terms of MAE over $100$ nominal models is marked by a blue arrow in Fig.~\ref{perturbfig} a) for training set and in Fig.~\ref{perturbfig} c) for the test set. For PCA-regression models, we observe that both training and test performance in terms of MAE degrades significantly when the sample covariance matrix is perturbed from $\hat\bC_{307}$ by removing or adding even a small number of samples (also seen in Fig.~\ref{perturbfig}~b) and d), which are the plots focused only on the range of $290-341$ samples from Fig.~\ref{perturbfig}~a) and c), respectively, and include error bars.). In contrast, the VNN performances on both training and test sets are stable with respect to perturbations to $\hat\bC_{307}$, as suggested by our theoretical results. However, as the number of samples decrease to $n'<50$, we observe a significant decrease in VNN stability in Fig.~\ref{perturbfig}~a) and c). We also observe that the correlation between the predicted age and true age in the test set for VNN is consistently more stable than that for PCA-regression models over the entire range of samples evaluated (Fig.~\ref{perturbfig} e)). Moreover, Fig.~\ref{perturbfig} e) also demonstrates that in the range of $n'<50$ samples, there is a sharper decline in the correlation for PCA-regression models as compared to VNN despite the MAE for PCA-regression models having smaller MAE than VNN in this range. Thus, our experiments demonstrate the stability of VNNs while also illustrating that PCA-regression models may be overfit on the principal components of the sample covariance matrix used in training. Additional experiments on synthetic data and FTDC datasets also illustrate the stability of VNNs (see~Appendix~\ref{add_data}).


\iftrue

\fi

\subsection{Transferability}\label{transfer}
\begin{figure}[t]
  \centering
  \includegraphics[scale=0.4]{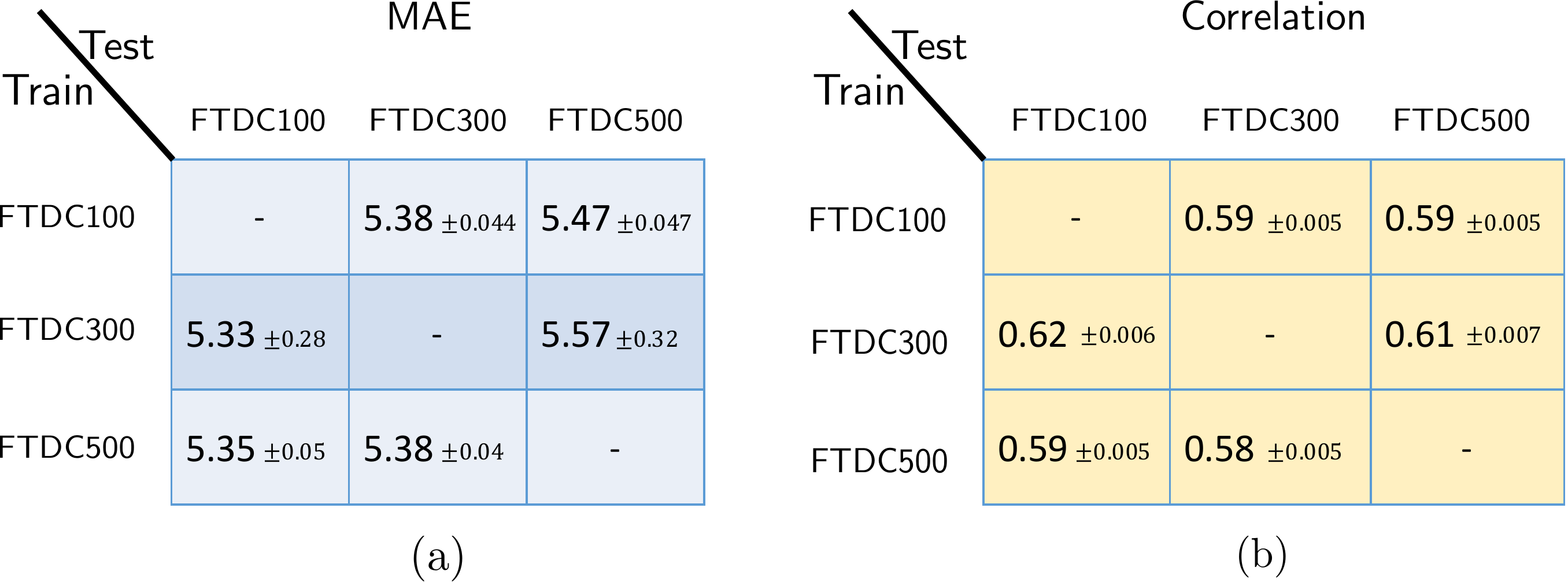}
   \caption{Transferability for VNNs across FTDC datasets.}
   \label{transfer_fig}
\end{figure}
We evaluate the transferability for VNN models trained on FTDC datasets across different resolutions. For this purpose, the VNN is trained at a specific resolution and its sample covariance matrix is replaced by the sample covariance matrix at a different resolution for testing. The training of VNNs in this context follows similar procedure as described in Section~\ref{perturb} and results in $100$ random VNN models for different permutations of the training set. For FTDC500, the VNN model consists of $1$-layer with a filter bank of $27$ $m$-dimensional outputs per layer for $m=500$ dimensions of the input data, and $2$ filter taps in each layer. The learning rate for the Adam optimizer is set to $0.008$. For FTDC300, the VNN model consists of $2$-layer architecture, with $44$ $m$-dimensional outputs per layer for $m=300$ dimensions in both layers and $2$ filter taps in each layer. The learning rate for the Adam optimizer is set to $0.0033$. For FTDC100, the VNN model consists of $2$-layer architecture, with $93$ $m$-dimensional outputs per layer for $m=100$ dimensions in both layers and $2$ filter taps in each layer. The learning rate for the optimizer is set to $0.0047$. The readout layer in each model evaluates the unweighted mean of the outputs of the final layer to form an estimate for age.

We tabulate the MAE in the matrix in Fig.~\ref{transfer_fig} a) and correlation between true age and predicted age in the matrix in Fig.~\ref{transfer_fig} b), where the row ID indicates the dataset for which VNN was trained and the column ID indicates the dataset on which the VNN was tested to evaluate transferability of VNN performance across datasets with different resolutions. For instance, the element at coordinates~$(1,2)$ in Fig.~\ref{transfer_fig}a) represents the MAE evaluated on complete FTDC300 dataset ($m=300$) for VNNs trained on FTDC100 dataset ($m=100$). The results in Fig.~\ref{transfer_fig} show that the performance of VNNs in terms of MAE and correlation between predicted age and true age can be transferred across different resolutions in the FTDC datasets. Note that this experiment is not feasible for PCA-regression models, where the principal components and the regression model would need to be re-evaluated for data from different resolution. 
\section{Conclusions}
In this paper, we have introduced coVariance neural networks (VNN) that operate on covariance matrices and whose architecture is derived from graph neural networks. We have studied the stability properties of VNNs for sample covariance matrices and shown that the stability of VNNs improves with the number of data samples $n$. Our experiments on real datasets have demonstrated that VNNs are significantly more stable than PCA-based approaches with respect to perturbations in the sample covariance matrix. Also, unlike PCA-based approaches, VNNs do not require the eigendecomposition of the sample covariance matrix.  Furthermore, on a set of multiresolution datasets, we have observed that VNN performance is also transferable across cortical thickness data collected at multiple resolutions without re-training. 

\section*{Acknowledgements}
This research was supported by National Science Foundation under the grants CCF-1750428 and CCF-1934962. The ABC dataset was provided by the Penn Alzheimer’s Disease Research Center (ADRC; NIH AG072979) at University of Pennsylvania. The MRI data for FTDC datasets were provided by the Penn Frontotemporal Degeneration Center (NIH AG066597).   Cortical thickness data were made available by Penn Image Computing and Science Lab at University of Pennsylvania.  

\newpage

\bibliographystyle{unsrtnat}
\bibliography{VNN_NeurIPS}
\newpage
\section*{Checklist}



\begin{enumerate}

\item For all authors...
\begin{enumerate}
  \item Do the main claims made in the abstract and introduction accurately reflect the paper's contributions and scope?
    \answerYes{}
  \item Did you describe the limitations of your work?
    \answerYes{} We discussed limitations of our work in Remark 2 and the discussion thereafter. 
  \item Did you discuss any potential negative societal impacts of your work?
    \answerNA{}
  \item Have you read the ethics review guidelines and ensured that your paper conforms to them?
    \answerYes{}
\end{enumerate}

\item If you are including theoretical results...
\begin{enumerate}
  \item Did you state the full set of assumptions of all theoretical results?
    \answerYes{}
        \item Did you include complete proofs of all theoretical results?
    \answerYes{}
\end{enumerate}

\item If you ran experiments...
\begin{enumerate}
  \item Did you include the code, data, and instructions needed to reproduce the main experimental results (either in the supplemental material or as a URL)?     \answerYes{} Data and code availability statement is available in Appendix~\ref{dataavail}. 

  \item Did you specify all the training details (e.g., data splits, hyperparameters, how they were chosen)?
    \answerYes{}
    The hyperparameters in the VNN architecture and learning rate of the optimizer are chosen by a hyperparameter optimization framework called Optuna~\cite{akiba2019optuna}. 
        \item Did you report error bars (e.g., with respect to the random seed after running experiments multiple times)?
    \answerYes{}
        \item Did you include the total amount of compute and the type of resources used (e.g., type of GPUs, internal cluster, or cloud provider)?
    
    The experiments were run on a $12$GB Nvidia GeForce RTX 3060 GPU.
\end{enumerate}

\item If you are using existing assets (e.g., code, data, models) or curating/releasing new assets...
\begin{enumerate}
  \item If your work uses existing assets, did you cite the creators?
    \answerYes{} 
  \item Did you mention the license of the assets?
    \answerNA{}
  \item Did you include any new assets either in the supplemental material or as a URL? \answerNA{}
  \item Did you discuss whether and how consent was obtained from people whose data you're using/curating?
    \answerYes{}
 All participants in the ABC and FTDC datasets took part in an informed consent procedure approved by an Institutional Review Board convened at University of Pennsylvania.
  \item Did you discuss whether the data you are using/curating contains personally identifiable information or offensive content?
    \answerNA{} 
    The data contains no personally identifiable information or offensive content.
    
\end{enumerate}

\item If you used crowdsourcing or conducted research with human subjects...
\begin{enumerate}
  \item Did you include the full text of instructions given to participants and screenshots, if applicable?
    \answerNA{}
  \item Did you describe any potential participant risks, with links to Institutional Review Board (IRB) approvals, if applicable? \answerNA{}
  \item Did you include the estimated hourly wage paid to participants and the total amount spent on participant compensation?
   
   All participants in the ABC and FTDC datasets received a monetary honorarium (of $\$25-\$40$ per participant for participation).
\end{enumerate}

\end{enumerate}
\newpage
\appendix
\section{Data and Code Availability}\label{dataavail}
Data for experiments on ABC and FTDC datasets may be requested through \url{https://www.pennbindlab.com/data-sharing} and upon review by the University of Pennsylvania Neurodegenerative Data Sharing Committee, access will be granted upon reasonable request. Data and code for synthetic experiments in Appendix D are available at \url{https://github.com/pennbindlab/VNN}. The implementations of the experiments on ABC and FTDC datasets are similar.
\section{Overview of PCA}\label{pca_ov}
The advantages of using PCA, its role in dimension reduction, and limitations are well documented~\cite{shlens2014tutorial}. 
 PCA is an orthogonal, linear transformation of the observed dataset~$\hat\bx_n$ via a change of basis, such that, the transformed data has minimal redundancy while the hidden structure or patterns of interest in the raw data are preserved for further statistical analyses. In practice, the implementation of PCA is given by 
\begin{align}
    \hat\by_n = \bP\hat\bx_n\;,
\end{align}
where the rows of $\bP \in \mR^{m\times m}$ represent the new basis on which the data $\hat\bx_n$ is projected and therefore, form the designed principal components in PCA. We next briefly discuss the construction of the transformation $\bP$ and the intuition behind it. Note that in the diagonal elements of the sample covariance matrix $\hat \bC_n$  represent the individual variances of the constituent $m$ features and its off-diagonal elements characterize the redundancy in the data. A desired objective of PCA is to de-correlate the dataset $\hat\bx_n$ by removing any second order dependencies. To achieve this, a linear transformation $\bP$ is designed, such that, the covariance of the transformed data $\by_n$, given by $\hat \bC_{\by} $ is a diagonal matrix. It can readily be verified that the eigenvectors of the sample covariance matrix $\hat \bC_n$ satisfy this property. The eigendecomposition of $ \hat \bC_n$ is formalized as
\begin{align}
    \hat \bC_n = \bU \bW \bU^{\sf T}\;,
\end{align}
where $\cdot^{\sf T}$ refers to the Hermitian operator, $\bU$ is the matrix constituted by orthonormal eigenvectors of $\hat \bC_n$ and $\bW= {\sf diag}(w_1,\dots,w_m)$ is the diagonal matrix of eigenvalues of $\hat \bC_n$, such that, $w_1\geq w_2\cdots\geq w_m$. Therefore, for transformation $\bP = \bU^{\sf T}$, such that, $\hat\by_n = \bU^{\sf T}\hat\bx_n$ we have
\begin{align}
    \hat \bC_{\by} = \frac{1}{n} \bU^{\sf T} \bx_n\bx_n^{\sf T} \bU =  \bU^{\sf T} (\bU \bW \bU^{\sf T}) \bU = \bW\;.
\end{align}
Hence, in practical implementation of PCA, the eigenvectors of the sample covariance matrix form the principal components that characterize the linear, orthogonal transformation of data $\hat\bx_n$. A common application of PCA is to extract the principal components that explain the most variance in $\hat\bx_n$ (thus, resulting in dimension reduction while preserving the most relevant information under certain assumptions), followed by designing statistical models for inference tasks such as prediction, classification etc~\cite{jolliffe2016principal}. 
\section{Stability of coVariance Filters}\label{stab_pf}
We start by characterizing the perturbation of sample covariance matrix $\hat\bC_n$
 with respect to $\bC$ in Lemma~\ref{lm1}.  For convenience, we use the notation $\hat\bC$ for $\hat\bC_n$.  To this end, we define
 \begin{align}
     \bE \dff \hat\bC - \bC\;,
 \end{align}
and $\bI_m$ as an $m\times m$ identity matrix.
\begin{lemma}\label{lm1}
Consider an ensemble covariance matrix $\bC$ with the eigendecomposition in~\eqref{eigenC} and a sample covariance matrix $\hat\bC$ with the eigendecomposition in~\eqref{sample_cov}. For any eigenvalue $\lambda_i > 0$ of $\bC$, the perturbation $\bE$ satisfies
\begin{align}
    \bE \bv_i = \beta_i\delta\bv_i + \delta\lambda_i \bv_i + ( \delta\lambda_i\bI_m -\bE) \delta\bv_i
\end{align}
where 
\begin{align}
    \beta_i \dff  (\lambda_i \bI_{m} - \bC),\quad \delta\bv_i \dff \bu_i-\bv_i,\quad \delta\lambda_i \dff w_i-\lambda_i\;.
\end{align}

\end{lemma}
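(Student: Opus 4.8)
\textbf{Proof proposal for Lemma~\ref{lm1}.}

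The plan is to derive the stated identity purely from the eigenvalue--eigenvector equations for $\bC$ and $\hat\bC$, without invoking any quantitative perturbation bound; it is an exact algebraic decomposition of $\bE\bv_i$. First I would write down the two defining relations: $\bC\bv_i = \lambda_i\bv_i$ and $\hat\bC\bu_i = w_i\bu_i$. Since $\hat\bC = \bC + \bE$, $\bu_i = \bv_i + \delta\bv_i$, and $w_i = \lambda_i + \delta\lambda_i$, I substitute these into the second relation to get $(\bC+\bE)(\bv_i + \delta\bv_i) = (\lambda_i + \delta\lambda_i)(\bv_i + \delta\bv_i)$.

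Next I would expand both sides and cancel the common term $\bC\bv_i = \lambda_i\bv_i$. The left side becomes $\lambda_i\bv_i + \bC\delta\bv_i + \bE\bv_i + \bE\delta\bv_i$; the right side becomes $\lambda_i\bv_i + \lambda_i\delta\bv_i + \delta\lambda_i\bv_i + \delta\lambda_i\delta\bv_i$. After cancellation and isolating $\bE\bv_i$, I obtain
\begin{align}
\bE\bv_i = \delta\lambda_i\bv_i + (\lambda_i\bI_m - \bC)\delta\bv_i + (\delta\lambda_i\bI_m - \bE)\delta\bv_i\;.
\end{align}
Recognizing $\beta_i = \lambda_i\bI_m - \bC$ (as defined in the statement) gives precisely $\bE\bv_i = \beta_i\delta\bv_i + \delta\lambda_i\bv_i + (\delta\lambda_i\bI_m - \bE)\delta\bv_i$, which is the claim. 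The only subtlety to note is that $\delta\bv_i = \bu_i - \bv_i$ presupposes a consistent choice of sign/phase for the eigenvector $\bu_i$ of $\hat\bC$ paired with $\bv_i$; I would remark that $\lambda_i > 0$ together with the pairing convention (e.g.\ choosing $\bu_i$ so that $\bu_i^{\sf T}\bv_i \geq 0$) makes $\delta\bv_i$ well-defined, and that the decomposition holds verbatim regardless of which valid pairing is fixed.

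There is essentially no hard step here — the result is a bookkeeping identity — so the ``main obstacle'' is merely being careful about the ordering of terms and the sign conventions for the perturbed eigenvector. The reason the lemma is stated in this particular grouped form is foreshadowing: in the subsequent proof of Theorem~\ref{filterstab} one wants to bound $\|\delta\bv_i\|$, and the term $\beta_i\delta\bv_i = (\lambda_i\bI_m - \bC)\delta\bv_i$ is the ``leading'' piece whose inversion (restricted to the orthogonal complement of $\bv_i$, where $\lambda_i\bI_m - \bC$ is invertible with norm controlled by the eigengap) yields the eigengap dependence, while $(\delta\lambda_i\bI_m - \bE)\delta\bv_i$ is a higher-order remainder. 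I would keep the statement as a clean exact identity and defer all of that estimation to the proof of the theorem.
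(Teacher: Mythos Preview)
Your proposal is correct and follows essentially the same route as the paper: start from $\hat\bC\bu_i = w_i\bu_i$, substitute $\hat\bC = \bC + \bE$, $\bu_i = \bv_i + \delta\bv_i$, $w_i = \lambda_i + \delta\lambda_i$, cancel $\bC\bv_i = \lambda_i\bv_i$, and rearrange. Your additional remarks on the sign convention for $\bu_i$ and on the role of each grouped term in the downstream Theorem~\ref{filterstab} analysis are accurate and slightly more explicit than the paper's presentation, but the underlying argument is identical.
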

\begin{proof}

Note that from the definition of eigenvectors and eigenvalues, we have
\begin{align}\label{egic}
   \hat\bC \bu_i = w_i\bu_i\;.
\end{align}
We can rewrite~\eqref{egic} in terms of perturbations with respect to the ensemble covariance matrix $\bC$ and the outputs of its eigendecomposition as follows:
\begin{align}\label{prt}
   (\hat\bC -\bC)  (\bv_i + \delta\bv_i) + \bC (\bv_i + \delta\bv_i) = (\lambda_i + \delta\lambda_i)(\bv_i + \delta\bv_i)\;,
\end{align}
where we have used $w_i = \lambda_i + \delta\lambda_i$ and $\bu_i = \bv_i + \delta\bv_i$. Using the fact that $\bC \bv_i = \lambda_i\bv_i$ and rearranging the terms in~\eqref{prt}, we have
\begin{align}\label{prt2}
    (\hat\bC -\bC) \bv_i = (\lambda_i \bI_{m} - \bC)\delta\bv_i + \delta\lambda_i  (\bv_i + \delta\bv_i) - (\hat\bC -\bC) \delta\bv_i\;.
\end{align}
By setting $\bE = \hat\bC -\bC$ and $\beta_i = \lambda_i \bI_{m} - \bC$, we can rewrite~\eqref{prt2} as
\begin{align}
    \bE\bv_i = \beta_i\delta\bv_i + \delta\lambda_i \bv_i + (\delta\lambda_i\bI_m - \bE) \delta\bv_i\;.
\end{align}
\end{proof}
Next, we leverage Lemma~\ref{lm1} to complete the proof of Theorem~\ref{filterstab}.
\section*{Proof of Theorem~\ref{filterstab}}
\begin{proof}
To start with, we note that the coVariance filters with respect to $\hat\bC$ and $\bC$ are given by
\begin{align}
    \bH(\hat\bC) = \sum\limits_{k=0}^{m}h_k\hat\bC^k \quad \text{and}\quad  \bH(\bC) = \sum\limits_{k=0}^{m}h_k\bC^k\;.
\end{align}
We aim to study the stability of the coVariance filters by analyzing the difference between $\bH(\hat\bC)$ and $\bH(\bC)$. For this purpose, we next establish the first order approximation for $\hat\bC^k $ in terms of $\bC$ and $\bE$. Using $\hat\bC = \bC + \bE$, the first order approximation of $\hat\bC^k$ is given by
\begin{align}\label{pf2}
    (\bC + \bE)^k = \bC^k + \sum\limits_{r=0}^{m} \bC^r \bE \bC^{k-r-1} + \tilde\bE\;,
\end{align}
where $\tilde \bE$ satisfies $\|\tilde \bE\| \leq \sum\limits_{r=2}^k {k\choose r}\|\bE\|^r\|\bC\|^{k-r}$. Using~\eqref{pf2}, we have
\begin{align}
    \bH(\hat\bC) - \bH(\bC) &= \sum\limits_{k=0}^{m} h_k [(\bC + \bE)^k - \bC^k]\;,\\
    &= \sum\limits_{k=0}^{m} h_k \sum\limits_{r=0}^{k-1} \bC^r \bE \bC^{k-r-1} + \tilde \bD\;,\label{pf3}
\end{align}
where $\tilde\bD$ satisfies $\|\tilde\bD\|^2 = {\cal O}(\|\bE\|^2)$~\cite{gama2020stability}. The focus of our subsequent analysis will be the first term in~\eqref{pf3}. For a random data sample $\bx = [x_1,\dots,x_m]^{\sf T}$, such that, $\|\bx\| < R$, for some $R>0$ and $\bx \in \mR^{m\times 1}$,  its VFT with respect to $\bC$ is given by $\tilde\bx = \bV^{\sf T} \bx$, where $\tilde\bx = [\tilde x_1,\dots,\tilde x_m]^{\sf T}$. The relationship $\tilde \bx $ and $\bx$ can be expressed as
\begin{align}\label{pf5}
   \bx = \sum\limits_{i=1}^m \tilde x_i \bv_i\;.
\end{align}
Multiplying both sides in~\eqref{pf3} by $\bx$ and by leveraging~\eqref{pf5}, we get
\begin{align}
    [\bH(\hat\bC) - \bH(\bC) ]\bx &= \sum\limits_{k=0}^{m} h_k \sum\limits_{r=0}^{k-1} \bC^r \bE \bC^{k-r-1} \bx + \tilde \bD\bx\;,\\
    &= \sum\limits_{i=1}^m \tilde x_i \sum\limits_{k=0}^{m} h_k \sum\limits_{r=0}^{k=1} \bC^r \bE \bC^{k-r-1}  \bv_i + \tilde \bD\bx\;,\label{t1}\\
    &=  \sum\limits_{i=1}^m \tilde x_i \sum\limits_{k=0}^{m} h_k \sum\limits_{r=0}^{k-1} \bC^r\lambda_i^{k-r-1} \bE\bv_i + \tilde \bD\bx\;,\label{t2}
\end{align}
where we have used $\bC \bv_i = \lambda_i\bv_i$ in the transition from~\eqref{t1} to~\eqref{t2}. 
We focus only on the first term in~\eqref{t2} and leverage the result from Lemma~\ref{lm1} that expands $\bE \bv_i$ to get
\begin{align}\label{pf6}
    \sum\limits_{i=1}^m \tilde x_i \sum\limits_{k=0}^{m} h_k \sum\limits_{r=0}^{k-1} \bC^r\lambda_i^{k-r-1} \bE\bv_i &= \underbrace{ \sum\limits_{i=1}^m \tilde x_i \sum\limits_{k=0}^{m} h_k \sum\limits_{r=0}^{k-1} \bC^r\lambda_i^{k-r-1} \beta_i\delta\bv_i}_\text{Term 1} \nonumber\\
    & \enskip + \underbrace{ \sum\limits_{i=1}^m \tilde x_i \sum\limits_{k=0}^{m} h_k \sum\limits_{r=0}^{k-1} \bC^r\lambda_i^{k-r-1} \delta\lambda_i\bv_i }_\text{Term 2}\nonumber\\
    &\enskip +  \underbrace{ \sum\limits_{i=1}^m \tilde x_i \sum\limits_{k=0}^{m} h_k \sum\limits_{r=0}^{k-1} \bC^r\lambda_i^{k-r-1} ( \delta\lambda_i\bI_m -\bE)\delta\bv_i }_\text{Term 3}\;.
\end{align}

Next, we analyze term 1, term 2, and term 3 in~\eqref{pf6} separately.

\textbf{Analysis of Term 1 in~\eqref{pf6}}: In the analysis of term 1, we start by noting that
\begin{align}
    \beta_i &= \lambda_i\bI_m - \bC \;,\\
    &= \sum\limits_{j = 1}^m (\lambda_i - \lambda_j)\bv_j\bv_j^{\sf T}\;,\label{t11}\\
    &=  \bV (\lambda_i\bI_m - \Lambda)\bV^{\sf T}\;.\label{t12}
\end{align}
Using~\eqref{t12} and $\delta\bv_i = \bu_i - \bv_i$ in term 1 in~\eqref{pf6}, we have
\begin{align}
     \sum\limits_{i=1}^m \tilde x_i \sum\limits_{k=0}^{m} h_k \sum\limits_{r=0}^{k-1} \bC^r\lambda_i^{k-r-1} \bV (\lambda_i\bI_m - \Lambda)\bV^{\sf T} (\bu_i-\bv_i)\;.\label{t13}
\end{align}
Using $\bC^r = \bV \Lambda^r \bV^{\sf T}$ in~\eqref{t13}, term 1 in~\eqref{pf6} is equivalent to
\begin{align}
    &\sum\limits_{i=1}^m \tilde x_i \sum\limits_{k=0}^{m} h_k \sum\limits_{r=0}^{k-1} \lambda_i^{k-r-1} \bV \Lambda^r (\lambda_i\bI_m - \Lambda)\bV^{\sf T} (\bu_i-\bv_i)\;,\\
    &= \sum\limits_{i=1}^m \tilde x_i \bV \bL_i \bV^{\sf T} (\bu_i - \bv_i)\;,\label{pf7}
\end{align}
where $\bL_i$ is a diagonal matrix whose $j$-th element is given by
\begin{align}
    [\bL_i]_j &= \sum\limits_{k=0}^{m} h_k \sum\limits_{r=0}^{k-1} (\lambda_i - \lambda_j) \lambda_i^{k-r-1} \lambda_j^r\;,\\
    &= \sum\limits_{k=0}^{m} h_k (\lambda_i - \lambda_j)\frac{\lambda_i^k - \lambda_j^k}{\lambda_i - \lambda_j}\;,\\
    & = \sum\limits_{k=0}^{m}h_k \lambda_i^k - \sum\limits_{k=0}^{m}h_k \lambda_j^k\;,\\
    &= h(\lambda_i) - h(\lambda_j)\;,\label{t15}
\end{align}
where $ h(\lambda_i)$ is the frequency response of the coVariance filter and is defined in~\eqref{vvf}. Therefore, we have $\bL_i = {\sf diag}([h(\lambda_i) - h(\lambda_j)]_j)$. Next, in~\eqref{pf7}, we note that 
\begin{align}\label{t14}
    \bV^{\sf T}(\bu_i - \bv_i) = [\bv_1^{\sf T}(\bu_i -\bv_i) , \cdots, \bv_m^{\sf T}(\bu_i - \bv_i)]^{\sf T}\;.
\end{align}
Using~\eqref{t14} and~\eqref{t15} in~\eqref{pf7} and $\bv_j^{\sf T}\bv_i = 0, \forall j\neq i$, we deduce that the term 1 in~\eqref{pf6} is equivalent to
\begin{align}\label{t1f}
    \sum\limits_{i=1}^m \tilde x_i \bV \bL_i \bV^{\sf T} (\bu_i - \bv_i) = \sum\limits_{i=1}^m \tilde x_i \bV \bJ_i\;,
\end{align}
where the $j$-th element of $\bJ_i$ is given by
\begin{align}
    [\bJ_i]_j = \begin{cases}
    &0\;, \quad \text{if }\enskip j = i\;,\\
    &(h(\lambda_i) - h(\lambda_j) )\bv_j^{\sf T} \bu_i\;, \enskip \text{otherwise}
    \end{cases}\;.
\end{align}
For the stability analysis, we are interested in the norm of term 1. Therefore, by noting the equivalence between the term 1 in~\eqref{pf6} and~\eqref{t1f}, after taking the uniform norm, we have
\begin{align}
    \left\lVert\sum\limits_{i=1}^m \tilde x_i \sum\limits_{k=0}^{m} h_k \sum\limits_{r=0}^{k-1} \bC^r\lambda_i^{k-r-1} \beta_i\delta\bv_i\right\rVert &=\left\lVert\sum\limits_{i=1}^m \tilde x_i \bV \bJ_i\right\rVert \;,\\
    & \leq \sum\limits_{i=1}^m |\tilde x_i| \max_{j, i\neq j} \|h(\lambda_i)- h(\lambda_j)\| \|\bv_j^{\sf T} \bu_i\|\;.
\end{align}
Note that $\bv_j^{\sf T} \bu_i$ is the inner product between the eigenvector $\bv_j$ of ensemble covariance matrix $\bC$ and the eigenvector $\bu_i$ of the sample covariance matrix $\hat\bC$. The bounds on $\bv_j^{\sf T} \bu_i$ in terms of the number of data samples $n$ have been studied in the existing literature. Here, we leverage the result from~\cite[Theorem 4.1]{loukas2017close} to conclude that if ${\sf sgn}(\lambda_j-\lambda_i)2 w_j > {\sf sgn}(\lambda_j-\lambda_i)(\lambda_j-\lambda_i)$ for $\lambda_i\neq \lambda_j$, the condition
\begin{align}\label{term1r}
     \left\lVert\sum\limits_{i=1}^m \tilde x_i \sum\limits_{k=0}^{m} h_k \sum\limits_{r=0}^{k-1} \bC^r\lambda_i^{k-r-1} \beta_i\delta\bv_i\right\rVert \leq \sum\limits_{i=1}^m |\tilde x_i|\max_{j, i\neq j} |h(\lambda_i)- h(\lambda_j)| \frac{2k_i}{n^{1/2  -\varepsilon}|\lambda_i - \lambda_j|}\;,
\end{align}
is true with probability at least $\left(1 - \frac{1}{n^{2\varepsilon}}\right)$ for some $\varepsilon \in (0,1/2]$, where $k_i = \Big(\mE[\|\bX\bX^{\sf T} \bv_i\|_2^2] - \lambda_i^2\Big)^{\frac{1}{2}}$. Furthermore, we note that the condition~$ {\sf sgn}(\lambda_j-\lambda_i)2 w_j > {\sf sgn}(\lambda_j-\lambda_i)(\lambda_j-\lambda_i)$ is satisfied with probability at least $1 - \frac{2k_i^2}{|\lambda_i - \lambda_j|}$~\cite[Corollary 4.2]{loukas2017close}, which via a union bound and first order approximation from Taylor series implies that~\eqref{term1r} is true with probability at least $1 - \frac{1}{n^{2\varepsilon}} - \frac{2\kappa m}{n}$ for $\kappa$ defined in~\eqref{addefs}. Therefore, for a coVariance filter with the property
\begin{align}\label{fltrprop}
    \max_{i,j \in \{1,\dots,m\}, i\neq j} \frac{|h(\lambda_i)-h(\lambda_j)|}{|\lambda_i-\lambda_j|}\leq \frac{M}{k_i}\;,
\end{align}
for some real constant $M > 0$, the condition in~\eqref{term1r} is equivalent to
\begin{align}
     \left\lVert\sum\limits_{i=1}^m \tilde x_i \sum\limits_{k=0}^{m} h_k \sum\limits_{r=0}^{k-1} \bC^r\lambda_i^{k-r-1} \beta_i\delta\bv_i\right\rVert \leq \frac{2 M}{n^{\frac{1}{2}-\varepsilon}}\sum\limits_{i=1}^m |\tilde x_i|\;,
\end{align}
which holds with probability at least  $1 - \frac{1}{n^{2\varepsilon}} - \frac{2\kappa m}{n}$. Furthermore, note that $\sum\limits_{i=1}^m |\tilde x_i| \leq \sqrt{m}\|\bx\|_2$. If the random sample $\bx$ satisfies $\|\bx\|_2 \leq R$, then we have
\begin{align}\label{final_pf1}
    \mP\left( \left\lVert\sum\limits_{i=1}^m \tilde x_i \sum\limits_{k=0}^{m} h_k \sum\limits_{r=0}^{k-1} \bC^r\lambda_i^{k-r-1} \beta_i\delta\bv_i\right\rVert \leq \frac{2}{n^{\frac{1}{2}-\varepsilon}}\sqrt{m} M R\right)  \geq 1-\frac{1}{n^{2\varepsilon}} - \frac{2\kappa m}{n}\;,
\end{align}
for any $\varepsilon\in (0,1/2]$.

\textbf{Analysis of Term 2 in~\eqref{pf6}}:
Using $\bC \bv_i = \lambda_i\bv_i$, we note that term 2 in~\eqref{pf6} is equivalent to
\begin{align}
    \sum\limits_{i=1}^m \tilde x_i \sum\limits_{k=0}^{m} h_k \sum\limits_{r=0}^{k-1} \bC^r\lambda_i^{k-r-1} \delta\lambda_i\bv_i &= \sum\limits_{i=1}^m \tilde x_i \sum\limits_{k=0}^{m} h_k \sum\limits_{r=0}^{k-1} \lambda_i^{k-1} \delta\lambda_i\bv_i \;,\\
    &=  \sum\limits_{i=1}^m \tilde x_i \sum\limits_{k=0}^{m}k h_k \lambda_i^{k-1} \delta\lambda_i\bv_i \;,\\
    &=  \sum\limits_{i=1}^m \tilde x_i h'(\lambda_i) \delta\lambda_i\bv_i \;.\label{simpl}
\end{align}
Next, using Weyl's theorem~\cite[Theorem 8.1.6]{golub2013matrix}, we note that  $\|\bE\| \leq \alpha$ implies that $|\delta\lambda_i| \leq \alpha$ for any $\alpha>0$. For a random instance $\bx$ of random vector $\bX$ whose  probability distribution is supported within a bounded region w.l.o.g, such that, $\|\bx\|\leq 1$, we have
\begin{align}\label{alpha}
    \mP\left(\bE \leq B\left(\frac{\|\bC\|\sqrt{\log m + u}}{\sqrt{n}} + \frac{(1 + \|\bC\|)(\log m+u)}{n}\right)\right) \geq 1 - 2^{-u}\;,
\end{align}
for some constant $B > 0$ and $u > 0$. The result in~\eqref{alpha} follows directly from~\cite[Theorem 5.6.1]{vershynin2018high}. Therefore, using~\eqref{simpl}, we have
\begin{align}
    \left\lVert\sum\limits_{i=1}^m \tilde x_i \sum\limits_{k=0}^{m} h_k \sum\limits_{r=0}^{k-1} \bC^r\lambda_i^{k-r-1} \delta\lambda_i\bv_i\right\rVert &\leq \sum\limits_{i=1}^m |\tilde x_i| |h'(\lambda_i)| |\delta\lambda_i| \|\bv_i\|\;.
\end{align}
Using~\eqref{alpha}, $|h'(\lambda_i)|\leq M/k_{\sf min}$ (where $k_{\sf min} = \min_{i\in \{1,\dots,m\}, \lambda_i>0}k_i$) from~\eqref{fltrprop}, and $\|\bv_i\| = 1$, we have
\begin{align}\label{final_pf2}
    &\mP\left( \left\lVert\sum\limits_{i=1}^m \tilde x_i \sum\limits_{k=0}^{m} h_k \sum\limits_{r=0}^{k-1} \bC^r\lambda_i^{k-r-1} \delta\lambda_i\bv_i\right\rVert \right.\nonumber\\
    &\quad \left.\leq \frac{A}{k_{\sf min}}\sqrt{m}M \left(\frac{\|\bC\|\sqrt{\log m + u}}{\sqrt{n}} + \frac{(1 + \|\bC\|)(\log m+u)}{n}\right)\right) \geq 1 - 2^{-u}\;,
\end{align}
for some constant $A>0$ and $u>0$.

\textbf{Analysis of Term 3 in~\eqref{pf6}}: We remark that the term 3 in~\eqref{pf6} consists of second order error terms that diminish faster with the number of samples $n$ as compared to term 1 and term 2. To illustrate this, we note two facts. First, from~\eqref{alpha} and Weyl's theorem, we note that $\| \delta\lambda_i\bI_m -\bE\| \leq 2\|\bE\|$ and $\|\bE\| \simeq {\cal O}(1/\sqrt{n})$ with high probability. Secondly, from the existing literature~\cite{taylor2019ensemble,mestre2008asymptotic}, we note that if $\hat\bC$ follows the Wishart distribution, we have
\begin{align}
    \mE[\|\delta \bv_i\|] = 0
\end{align}
and
\begin{align}
    \mE[n\|\delta\bv_i\|^2] = \sum\limits_{j\neq i} \frac{\lambda_i\lambda_j}{(\lambda_i-\lambda_j)^2}\;.
\end{align}
Let $\sigma_{\bv_i}^2 \triangleq\sum\limits_{j\neq i} \frac{\lambda_i\lambda_j}{(\lambda_i-\lambda_j)^2}$. Using Chebyshev's inequality, we note that
\begin{align}\label{cheb}
    \mP\left(\sqrt{n}\|\delta\bv_i\| \geq \gamma \sigma_{\bv_i}\right) \leq \frac{1}{\gamma^2}\;,
\end{align}
for any constant $\gamma > 1$. Furthermore,~\eqref{cheb} is equivalent to
\begin{align}
     \mP\left(\|\delta\bv_i\| \leq \frac{\gamma}{\sqrt{n}} \sigma_{\bv_i}\right) \geq 1 - \frac{1}{\gamma^2}\;,
\end{align}
which implies that $\|\delta\bv_i\| = {\cal O}(1/\sqrt{n})$ with high probability. Therefore, the second order error term $( \delta \lambda_i\bI_m -\bE)\delta\bv_i$ scales as ${\cal O}(1/n)$, which diminishes faster with $n$ as compared to terms 1 and 2, that individually scale as ${\cal O}(1/n^{1/2 - \varepsilon})$  for $\varepsilon \in (0,1/2]$ and ${\cal O}(1/\sqrt{n})$, respectively. 

The proof of Theorem~\ref{filterstab} is completed by noting that the condition on $\|[\bH(\hat\bC)-\bH(\bC)]\bx\|$ reduces to the condition on operator norm  $\|[\bH(\hat\bC)-\bH(\bC)]\|$ for any $\|\bx\|\leq 1$ and that the terms scaling at $1/\sqrt{n}$ or slower in~\eqref{final_pf1} and~\eqref{final_pf2} dominate the scaling behavior of the upper bound on $\|[\bH(\hat\bC)-\bH(\bC)]\|$. 
\end{proof}

\section{VNN architecture}\label{vnn_arch}

\begin{figure}[H]
  \centering
  \includegraphics[scale=0.25]{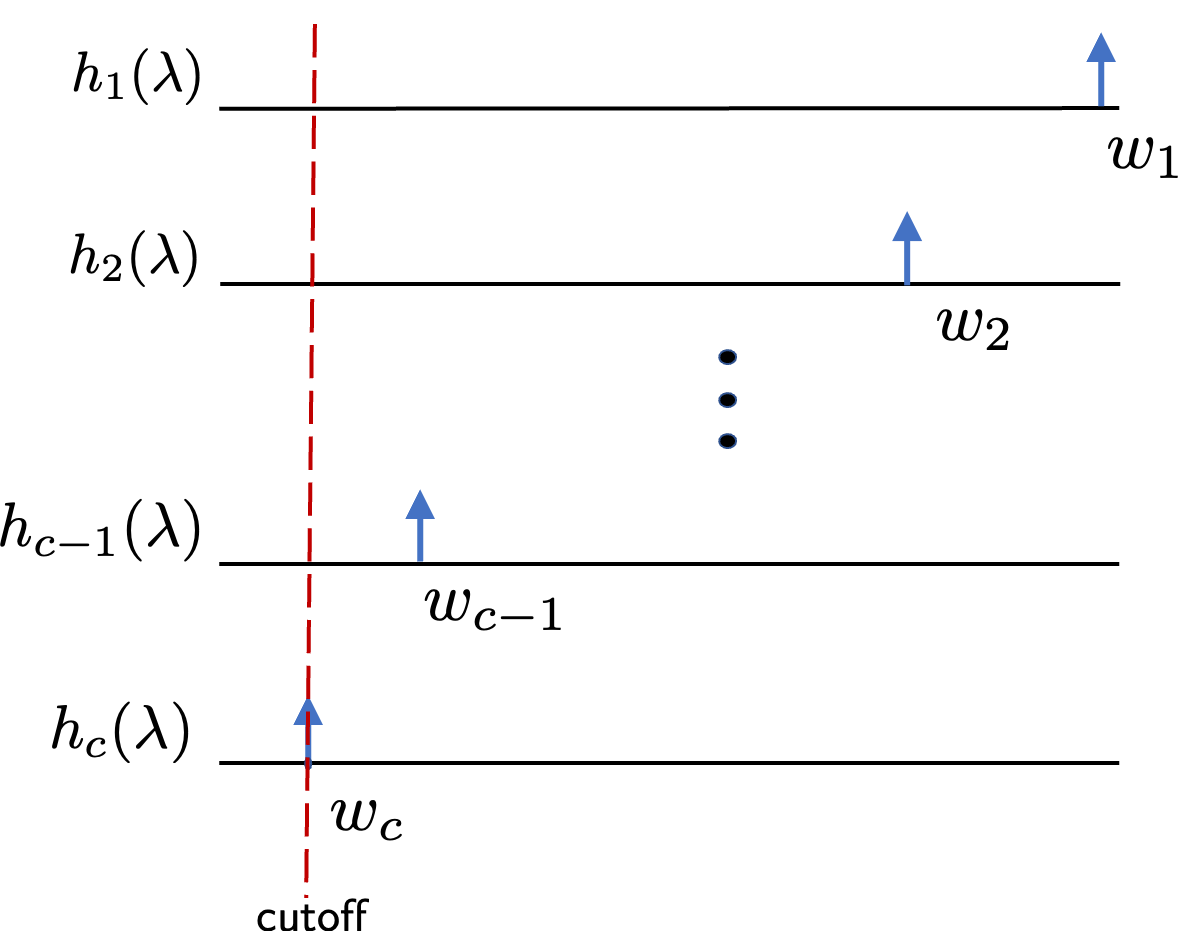}
   \caption{Filter response for different coVariance filters sufficient to implement PCA transformation that includes $c$ largest eigenvalues of the covariance matrix $\hat\bC_n$ according to Theorem~\ref{pca_cf}.}
   \label{pcafilters}
\end{figure}

\begin{figure}[H]
  \centering
  \includegraphics[scale=0.75]{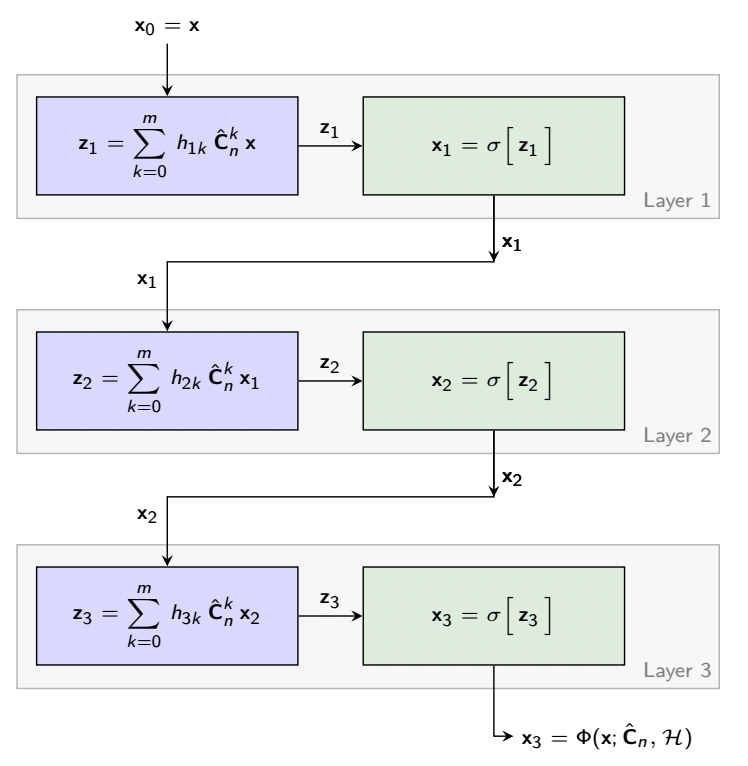}
   \caption{A 3-layer VNN architecture.}
   \label{vnnfig}
\end{figure}

\section{Additional Experiments and Discussions}\label{add_data}
In this section, we provide additional discussions on the age prediction task using cortical thickness and related experiments on the synthetic data. In Section~\ref{cort_data}, we briefly discuss the motivation behind studying age prediction and PCA-based statistical analysis in this context. In Section~\ref{dataacq}, we provide additional details on cortical thickness data acquisition. In Section~\ref{cmpr}, we report the results for stability analysis of VNNs and PCA-regression models for FTDC100 ($m=100$) and FTDC300 ($m=300$) datasets. In Section~\ref{cmpr_synth}, we study the stability of VNNs on two simulated settings that include non-linear and linear data models. In Section~\ref{transfer_figs}, we include additional figures that supplement the VNN transferability results in Fig.~\ref{transfer_fig}.

\subsection{Predicting Brain Age with Cortical Thickness Data}\label{cort_data}
Brain age estimation using various modalities of magnetic resonance imaging (MRI) is an active area of research~\cite{sajedi2019age,valizadeh2017age,aycheh2018biological}. A promising application of brain age prediction is early detection of neurodegenerative diseases (such as Alzheimer's, Huntingson's disease) which may manifest themselves as error in age prediction in pathological contexts by machine learning models trained on healthy subjects. Cortex anatomical measures extracted from structural MRI scans have shown promising results in age prediction in existing studies~\cite{valizadeh2017age,aycheh2018biological}. Cortical measures across the brain usually have high collinearity and therefore, the age prediction pipeline consisting of dimension reduction on cortical features and regression models is commonly employed. The utility of dimension reduction is apparent in ABC and FTDC datasets used in this paper as well from Fig.~\ref{abc_eig} and Fig.~\ref{ftdc_eig}, where we observe that the eigenvalue distribution for ABC and FTDC300 datasets is skewed and therefore, PCA based dimensionality reduction is well-motivated. 
\begin{figure*}[t]
  \centering
  \begin{subfigure}{0.45\textwidth}
  \includegraphics[scale=0.25]{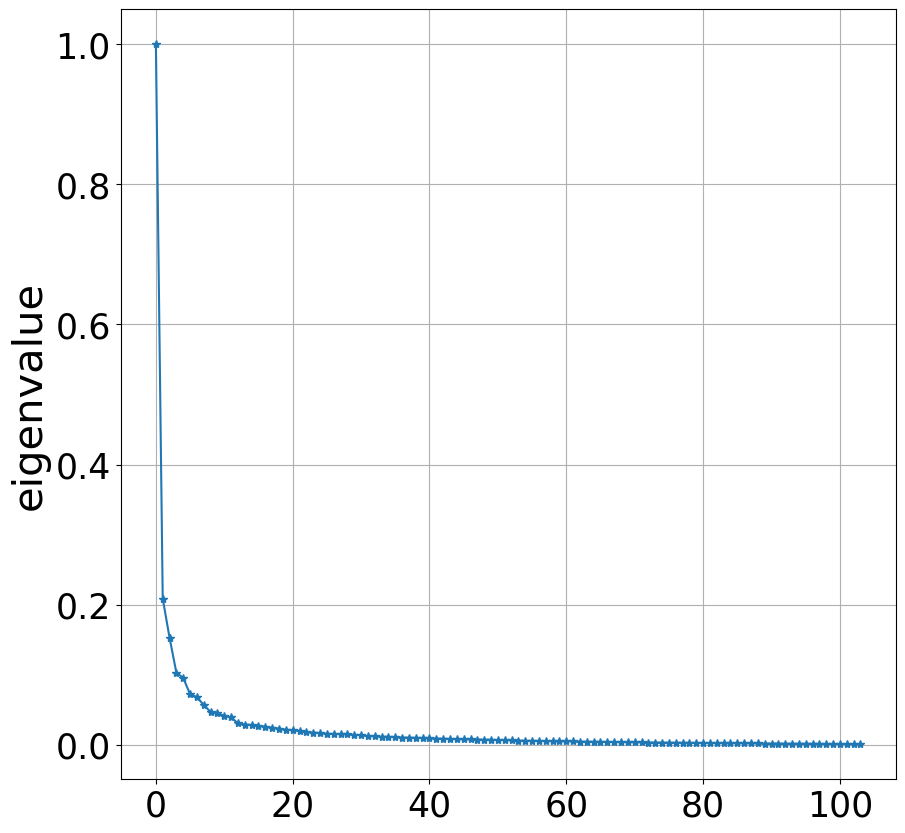}
   \caption{Eigenvalue distribution for ABC dataset.}
   \label{abc_eig}
 \end{subfigure}
   \hfill
   \begin{subfigure}{0.45\textwidth}
  \centering
  \includegraphics[scale=0.25]{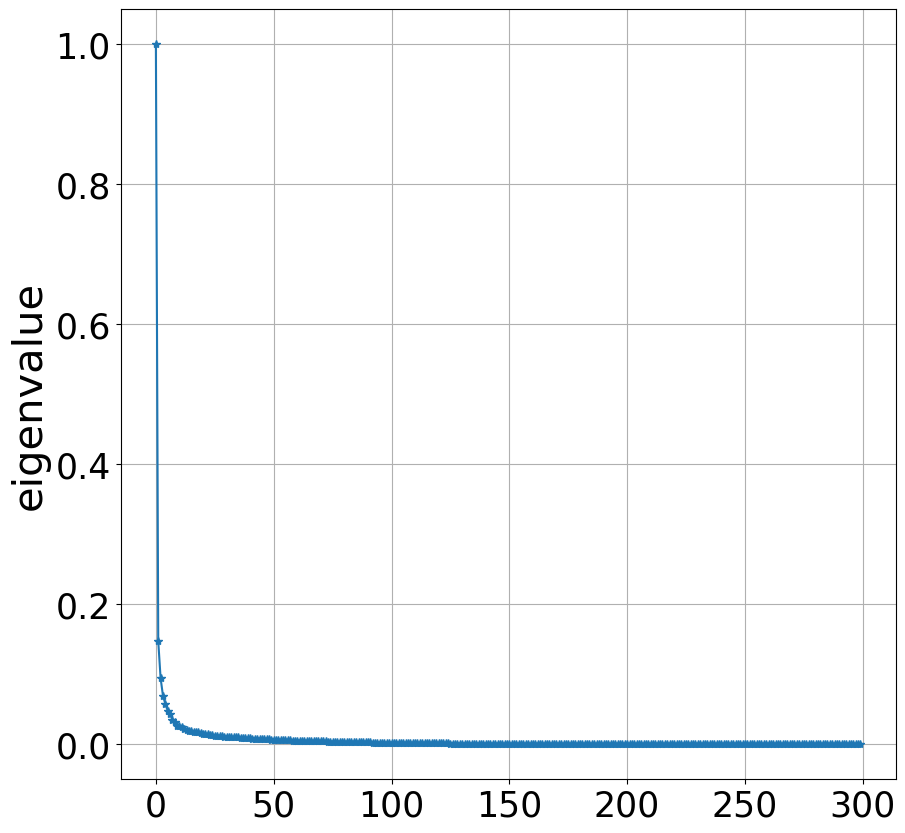}
   \caption{Eigenvalue distribution for FTDC300 dataset.}
   \label{ftdc_eig}
\end{subfigure}
\caption{Eigenvalue distributions.}
\end{figure*}

The studies in~\cite{valizadeh2017age} and~\cite{aycheh2018biological} report age prediction results on different datasets using various approaches, such as PCA-based regression (similar to the PCA-regression models evaluated in this paper), ridge regression, lasso based regression, and fully connected neural networks. All these approaches show comparable performances in predicting age using cortical anatomical measures~\cite{valizadeh2017age}. The objective of our experiments is to demonstrate the properties of VNNs, such as stability and transferability, against other covariance matrix driven statistical analyses. In this context, PCA-based regression models are the natural baselines for comparison against VNNs. 

\subsection{Cortical Thickness Data Acquisition}\label{dataacq}
Cortical thickness measures in regions of interest (ROIs) were derived using $0.8-1$ mm isotropic T1-weighted MRI. The complete pipeline for cortical thickness extraction for ABC data is similar to that in~\cite{phillips2019longitudinal}.

\subsection{Stability of VNNs on FTDC100 and FTDC300 datasets}\label{cmpr}
In this subsection, we study the stability of VNNs for FTDC100 and FTDC300 datasets in a similar fashion as for ABC dataset in Section~\ref{vnn_stab}. We split each dataset into a $90/10$ train/test split, such that, we have $153$ samples in the training set and $17$ samples in the test set. The nominal VNN and PCA-regression models for FTDC100 and FTDC300 are trained on their respective sample covariance matrices $\hat\bC_{153}$ derived from cortical thickness data. The architecture and hyperparamaters for VNN training for FTDC100 and FTDC300 are same as that reported in Section~\ref{transfer}. Figure~\ref{FTDC_sta} a) and b) show the variance in MAE over training and test sets for nominal models based on VNN and PCA-regression with respect to $\hat\bC_{153}$ for FTDC100 dataset. Figure~\ref{FTDC_sta} c) and d) show the variance in MAE over training and test sets for nominal models based on VNN and PCA-regression with respect to $\hat\bC_{153}$ for FTDC300 dataset. For both datasets, we observe that VNNs are stable with respect to the perturbations in the sample covariance matrix. Also, note that our theoretical results in Theorem~\ref{perturb} and Theorem~\ref{vnn_stab} suggest stability of coVariance filters and VNNs for $n>m$. Our observations from the experiments on FTDC300 dataset show evidence that VNNs retain stability even when this condition is violated. Significant randomness is induced into the performance of ${\sf PCA-LR}$ when the principal components are perturbed due to perturbations in $\hat\bC_{153}$ for both datasets. However, in Fig.~\ref{FTDC_sta}, we also observe that {\sf PCA-rbf} retains stability in performance when $\hat\bC_{153}$ is replaced by $\hat\bC_{n'}$ for $n'\in [65,152]$.

\begin{figure}[t]
  \centering
  \includegraphics[scale=0.6]{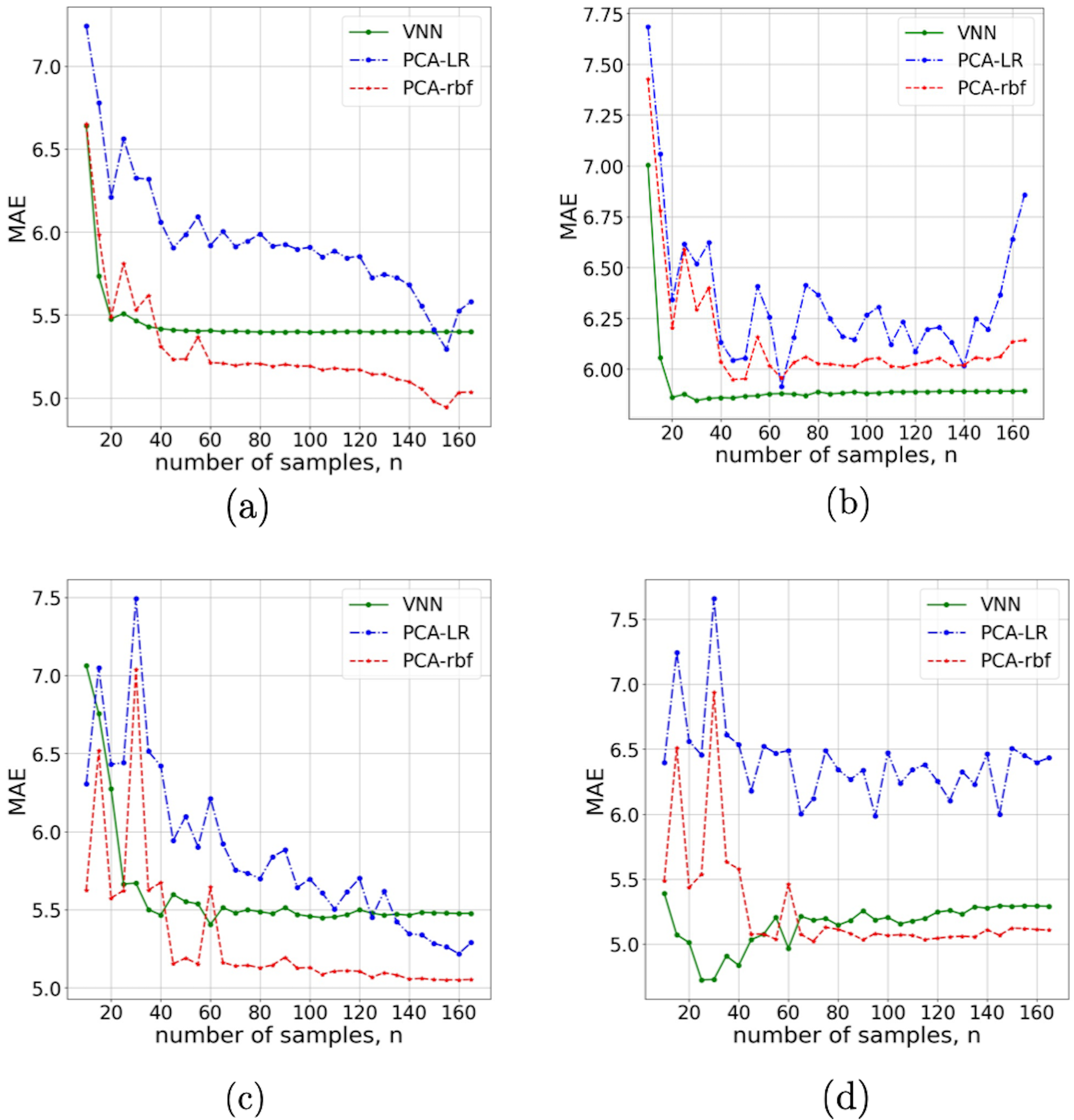}
   \caption{Stability on FTDC100 ((a) for training set and (b) for test set) and FTDC300 ((c) for training set and (d) for test set).}
   \label{FTDC_sta}
\end{figure}
\subsection{Stability of VNNs on Synthetic Data}\label{cmpr_synth}
We consider two settings for synthetic data. 

{\bf Friedman regression problem}:
This setting is described as `Friedman1' in~\cite{breiman1996bagging} and is generated using the routine \texttt{sklearn.datasets.make\_friedman1} in python. In the simulated dataset, we have $m$ independent features or predictors, each sampled uniformly from the range $[0,1]$. Out of $m$ predictors, any $5$ are used to generate the response variable and others are independent of the response. If the $5$ relevant predictors are given by $x_1,x_2,x_3,x_4$, and $x_5$, they are related to the response $y$ as
\begin{align}
    y = 10\sin(\pi x_1 x_2) +20(x_3 - 0.5)^2 +10x_4 + 5x_5 + \vartheta\;,
\end{align}
where $\vartheta$ represents noise. We generate a dataset of $n=1000$ samples using $m = 100$ features and noise distributed according to ${\cal N}(0,1)$. Figure~\ref{friedman1} a) shows the distribution of eigenvalues in the covariance matrix for the dataset used. The dataset is split into a $90/10$ train/test split and a sample covariance matrix $\hat\bC_{900}$ is evaluated from the features in the training set. Next, we perform regression against the response using VNN, {\sf PCA-LR}, and {\sf PCA-rbf} models. For VNNs, we use the same architecture and hyperparameters as described in Section~\ref{perturb}. Using different permutations of the training set, we train $100$ nominal models for VNN, {\sf PCA-LR}, and {\sf PCA-rbf}.

Figure~\ref{friedman1} b) plots the variation in mean performance of the nominal models with respect to perturbations in the sample covariance matrix. The last data point in Fig.~\ref{friedman1} b) corresponds to the performance of nominal models trained on $\hat\bC_{900}$. When $\hat\bC_{900}$ is replaced with $\hat\bC_{n'}$ for $n'\in [5,899]$, we observe that the VNN performance retains stability for sample covariance matrices generated with $n'>300$ samples. However, considerable randomness is introduced in the performance of PCA-regression models when the principal components are evaluated from $\hat\bC_{n'}$ for $n'\neq 900$. Similar phenomenon is observed for the performance on the test set in Fig.~\ref{friedman1} c). 

\begin{figure}[t]
  \centering
  \includegraphics[scale=0.35]{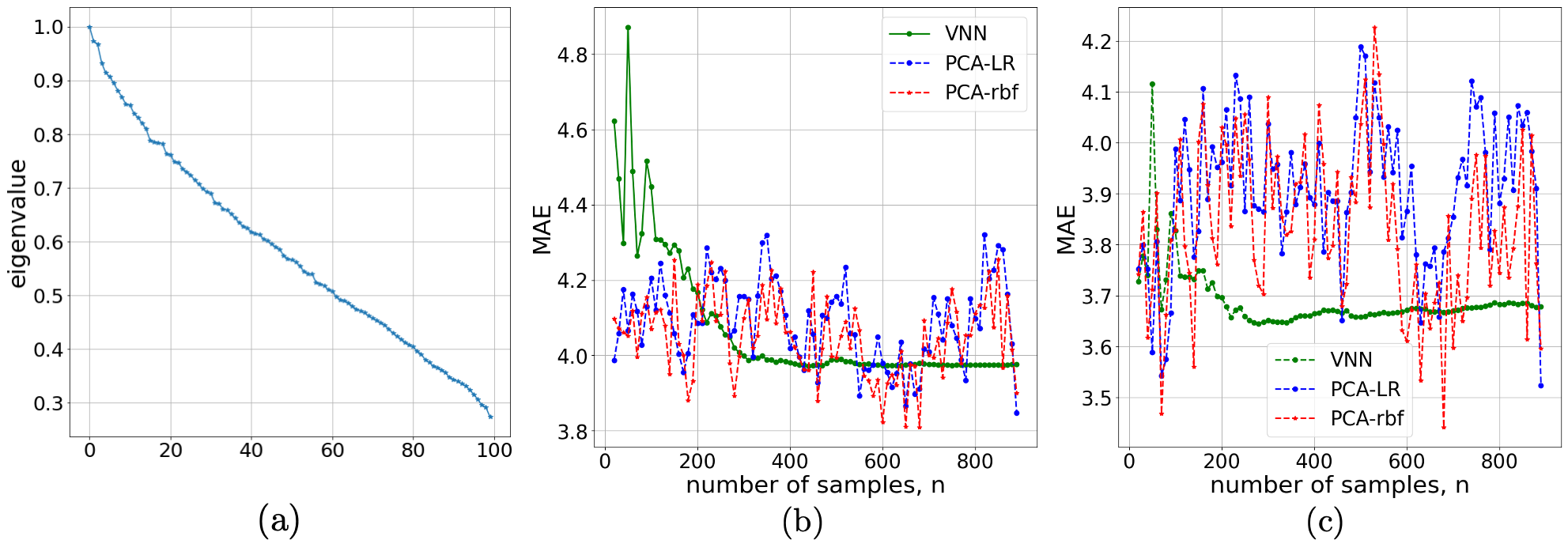}
   \caption{Friedman regression problem.}
   \label{friedman1}
\end{figure}

{\bf Linear regression problem}:
In this setting, we generate random linear regression problems using the routine \texttt{sklearn.datasets.make\_regression} in python, which allows us to specify the number of informative features; effective rank of the input dataset, i.e., the approximate number of singular vectors required to explain most of the input data by linear combinations; tail parameter which is the relative importance of the fat noisy tail of the singular values profile; and noise. In our experiments, we set the input dimension $m=100$, the dimension of the response to be $1$, number of samples $n=1000$, number of informative features to be $20$, effective rank of the input dataset to be $25$ and noise to be distributed as ${\cal N}(0,3)$. Since the stability of VNNs depends on the eigengap and kurtosis of underlying distribution of the data in direction of eigenvectors according to Theorem~\ref{vnn_stab}, we also aim to study the effects on VNN distribution with respect to variation in the strength of the tail of the eigenvalue distribution. To this end, we generate two datasets for linear regression with tail parameters set to $0.7$ and $0.2$. The eigenvalues of the covariance matrix for tail set to $0.7$ are more spread out as compared to that for tail set to $0.2$ (see Fig.~\ref{LR07} a) and Fig.~\ref{LR02} a)).

To evaluate the stability of VNNs, we split each dataset into a $90/10$ train/test split and generate sample covariance matrices $\hat\bC_{900}$ for both. Next, we perform regression using VNN and {\sf PCA-LR} models. For VNNs, we use the same architecture and hyperparameters as described in Section~\ref{perturb}. Using different permutations of the training set, we obtain $100$ nominal models for VNN and ${\sf PCA-lR}$. Figure~\ref{LR07} b) plots the variations in mean MAE performances of the nominal models with respect to perturbations in the sample covariance matrix, with the last data point corresponding to $n=900$, i.e., $\hat\bC_{900}$. When $\hat\bC_{900}$ is replaces with $\hat\bC_{n'}$ for any $n'\in[5,899]$, our experiments show that VNN performance is stable but significant randomness is induced into the performance of {\sf PCA-LR} model. Similar phenomenon is observed for the performance on the test set in Fig.~\ref{LR07} c). Same discussion also follows for the results in Fig.~\ref{LR02} b)-c), where we have set the tail parameter for eigenvalue distribution to be $0.2$. Comparison of MAE performances in Fig.~\ref{LR07} c) and Fig.~\ref{LR02} c) reveals that both VNN and ${\sf PCA-LR}$ perform significantly better for dataset with tail set to $0.2$ than that for $0.7$. This observation is along the expected lines as the first $25$ eigenvalues and eigenvectors that contain the most information are more separated (and hence, more accurately estimated by the sample covariance matrix) for the dataset with tail parameter $0.2$ as compared to the dataset with tail parameter $0.7$. Moreover, VNNs retain the stability property for both datasets. 

Finally, we also note that the observations above also extend to data with higher dimensions. Figure~\ref{LR071000} illustrates the results on a setting for linear regression with $m=1000$, number of informative features to be $20$, effective rank of the input dataset to be $25$ and noise to be distributed as ${\cal N}(0,3)$. The tail is set to $0.7$. The number of samples is $5000$ of which $4500$ are used to form the covariance matrix ${\hat \bC}_{4500}$ for training PCA-regression and VNN models. Figure~\ref{LR071000} a) illustrates the eigenvalue distribution for this setting. Figure~\ref{LR071000} b) and c) show the results of stability analysis by perturbing ${\hat \bC}_{4500}$ for training set and test set, respectively. 
\begin{figure}[t]
  \centering
  \includegraphics[scale=0.3]{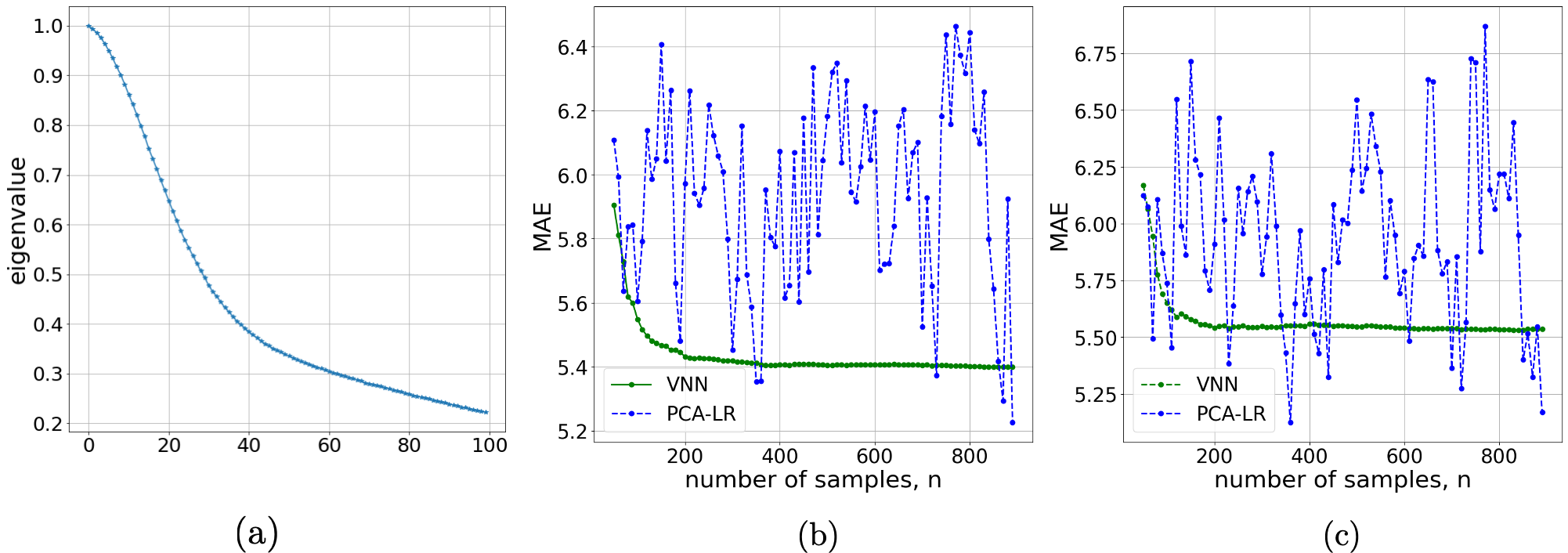}
   \caption{Stability of VNNs on linear regression problem (tail = 0.7).}
   \label{LR07}
\end{figure}

\begin{figure}[t]
  \centering
  \includegraphics[scale=0.3]{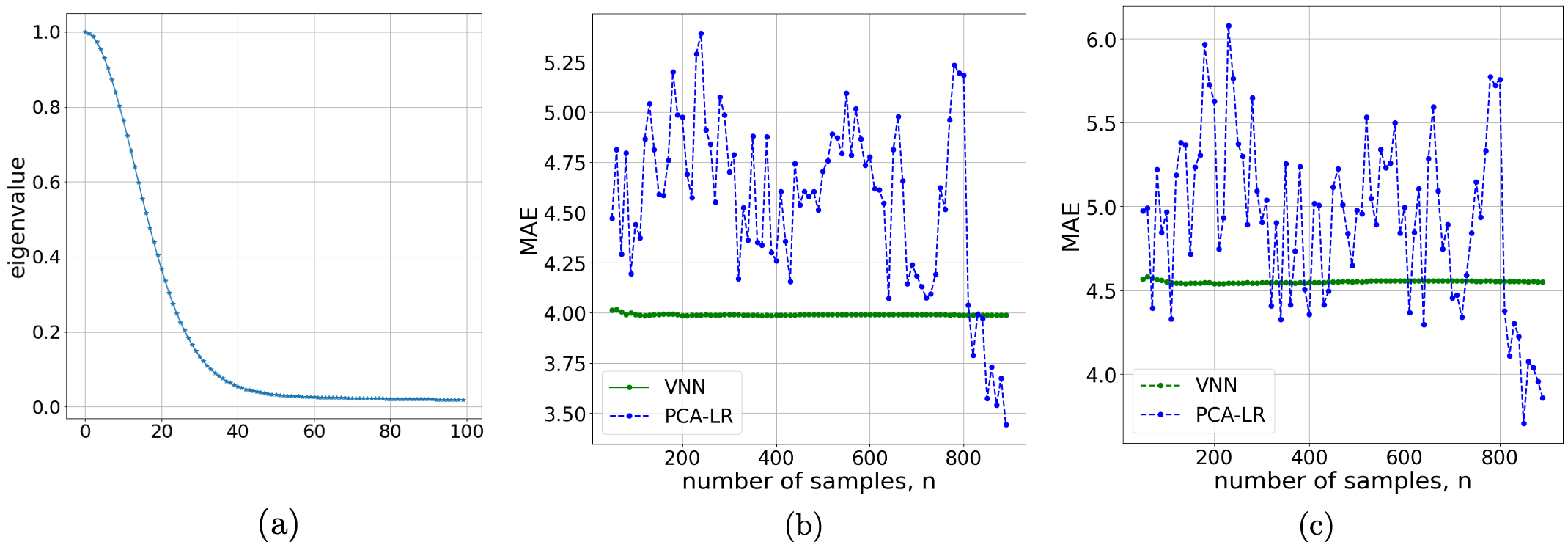}
   \caption{Stability of VNNs on linear regression problem (tail = 0.2).}
   \label{LR02}
\end{figure}

\begin{figure}[H]
  \centering
  \includegraphics[scale=0.3]{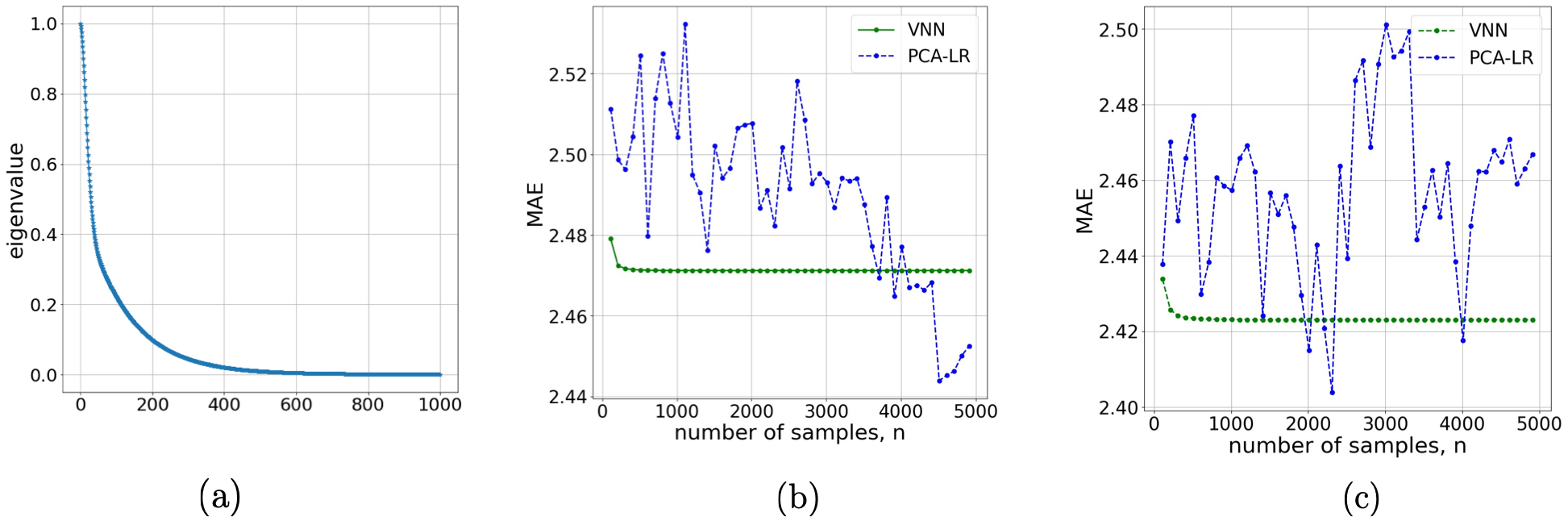}
   \caption{Stability of VNNs on linear regression problem with $m=1000$ (tail = 0.7).}
   \label{LR071000}
\end{figure}
\begin{figure}[H]
  \centering
  \includegraphics[scale=0.3]{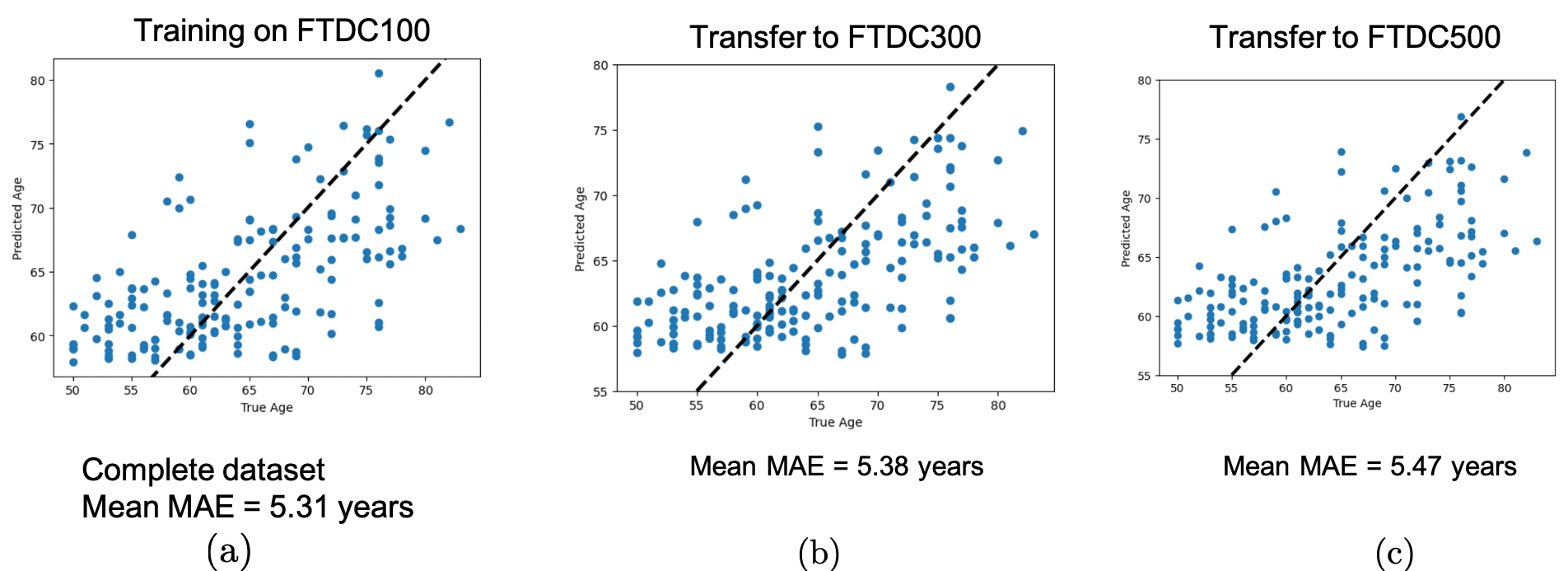}
   \caption{Transferability of VNN trained on FTDC100 to FTDC300 and FTDC500 datasets.}
   \label{transfer_supp}
\end{figure}
\subsection{Transferability of VNNs}\label{transfer_figs}
In Fig.~\ref{transfer_supp}, we show the predicted age vs true age plots corresponding to the first rows of the matrices in Fig.~\ref{transfer_fig}, i.e., when the VNN is trained on FTDC100 dataset and its transferability is evaluated on FTDC300 and FTDC500 datasets. Similar plots are observed for transferability in other reported settings as well.

\end{document}